\definecolor{Darkblue}{rgb}{0,0,0.4}
\definecolor{Brown}{cmyk}{0,0.81,1.,0.60}
\definecolor{Purple}{cmyk}{0.45,0.86,0,0}
\newcommand{\lref}[2][]{\hyperref[#2]{#1~\ref*{#2}}}
\theoremstyle{plain}
\newtheorem{theorem}{Theorem}[section]
\theoremstyle{definition}
\newtheorem{assumption}[theorem]{Assumption}
\theoremstyle{remark}
\newtheorem{claim}[theorem]{Claim}
\newtheorem{fact}[theorem]{Fact}
\newcommand{\lipschitz}{L}
\newcommand{\Tr}{\text{Tr}}
\renewcommand{\epsilon}{\varepsilon}
\renewcommand{\hat}{\widehat}
\renewcommand{\tilde}{\widetilde}
\newcommand{\inner}[2]{\left< #1, #2 \right>}
\newcommand{\norm}[1]{\left\lVert#1\right\rVert}
\newcommand{\para}[1]{\left(#1\right)}
\newcommand{\abs}[1]{\left|#1\right|}
\DeclareMathOperator*{\Exp}{\mathbb{E}}
\newcommand{\E}{\Exp}
\renewcommand{\cite}[1]{\citep{#1}}
\newcommand{\reals}{\mathbb{R}}
\newcommand{\asseq}{\coloneqq}
\newcommand{\loss}{\ell}
\newcommand{\cF}{\mathcal{F}}
\newcommand{\cY}{\mathcal{Y}}
\title{Stacking as Accelerated Gradient Descent}
\let\citet\cite
\author{
Naman Agarwal \\
Google DeepMind\\
\texttt{namanagarwal@google.com}\\
\and
Pranjal Awasthi \\
Google Research\\
\texttt{pranjalawasthi@google.com}
\and
Satyen Kale \\
Google Research\\
\texttt{satyenkale@google.com}
\and
Eric Zhao\\
UC Berkeley, Google Research\\
\texttt{eric.zh@berkeley.edu} 
\and
}
\begin{document}
\def\arxiv{1}

\date{} 

\maketitle

\begin{abstract}
    Stacking, a heuristic technique for training deep residual networks by progressively increasing the number of layers and initializing new layers by copying parameters from older layers, has proven quite successful in improving the efficiency of training deep neural networks. In this paper, we propose a theoretical explanation for the efficacy of stacking: viz., stacking implements a form of Nesterov's accelerated gradient descent. The theory also covers simpler models such as the additive ensembles constructed in boosting methods, and provides an explanation for a similar widely-used practical heuristic for initializing the new classifier in each round of boosting. We also prove that for certain deep linear residual networks, stacking does provide accelerated training, via a new potential function analysis of the Nesterov's accelerated gradient method which allows errors in updates. We conduct proof-of-concept experiments to validate our theory as well.
\end{abstract}

\section{Introduction}
\label{sec:intro}
Deep learning architectures are ubiquitous today and have been responsible tremendous technological advances in machine learning. However, until 2006, training deep architectures was extremely challenging. The deep learning revolution of the past couple of decades was ushered in with the discovery that a classical technique, viz. \emph{greedy layer-wise pretraining}, can be used to train general deep architectures (see \citep[Section 15.1]{Goodfellow-et-al-2016} for a historical account of this). 
Previously, only deep architectures with special structure like convolutions or recurrences were known to be feasible to train. Greedy layer-wise pretraining is a very intuitive technique where a deep network is built and trained in a stagewise manner. Starting with a small network that is easy to train, the technique prescribes adding new layers over a number of stages, and in each stage training the newly added layers (and, potentially, the older layers as well) for a certain number of training steps. This process continues until the desired model depth is reached. 

More modern developments such as residual connections \cite{kaiminghe} and normalization layers \cite{batchnorm} have made it possible to directly train deep networks without using greedy layer-wise pretraining. However, in recent years, the tremendous success of deep learning architectures based on transformers \citep{vaswani2017attention} in domains such as language modeling, and computer vision \citep{radford2019language, brown2020language, chen2022pali, tu2023towards} has led to the trend of scaling model capacity with ever increasing model sizes for improved performance 
\citep{chowdhery2022palm, achiam2023gpt,team2023gemini}. This endeavour comes at a significant cost as model training may often take months and require several million dollars of compute resources \citep{chowdhery2022palm}. As a result there has been a surge of recent work aimed at faster training of large transformer based models. These works include methods for sparse training such as mixture of experts (MoE) models \citep{shazeer2017outrageously, jiang2024mixtral}, methods for approximate sparse attention mechanisms \citep{beltagy2020longformer} and better optimization methods \citep{shazeer2018adafactor,liu2023sophia,shampoo}.

\if\arxiv0
\begin{figure}[ht]
\centering
\begin{minipage}{.5\textwidth}
  \centering
  \includegraphics[width=.98\textwidth]{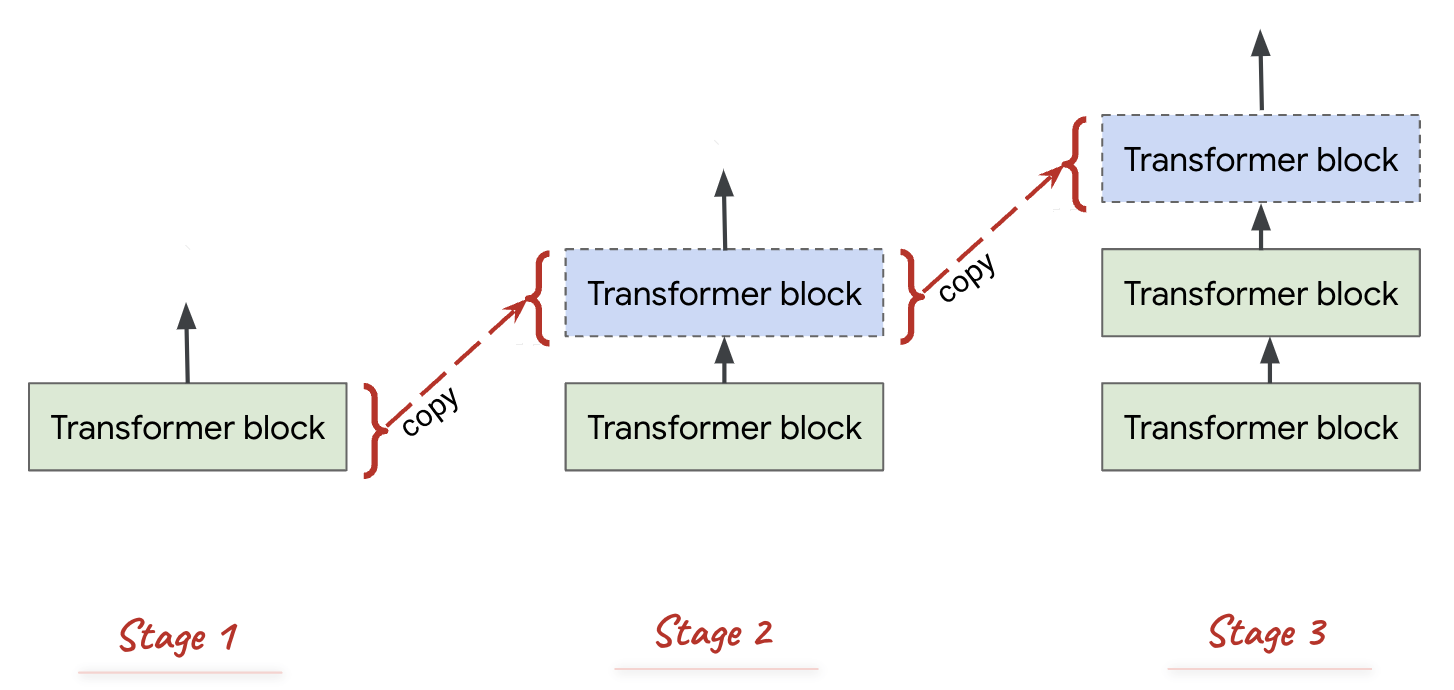}
  \caption{Stacking for language models.}
  \label{fig:stacking-lm}
\end{minipage}%
\begin{minipage}{.5\textwidth}
  \centering
  \includegraphics[width=.8\textwidth]{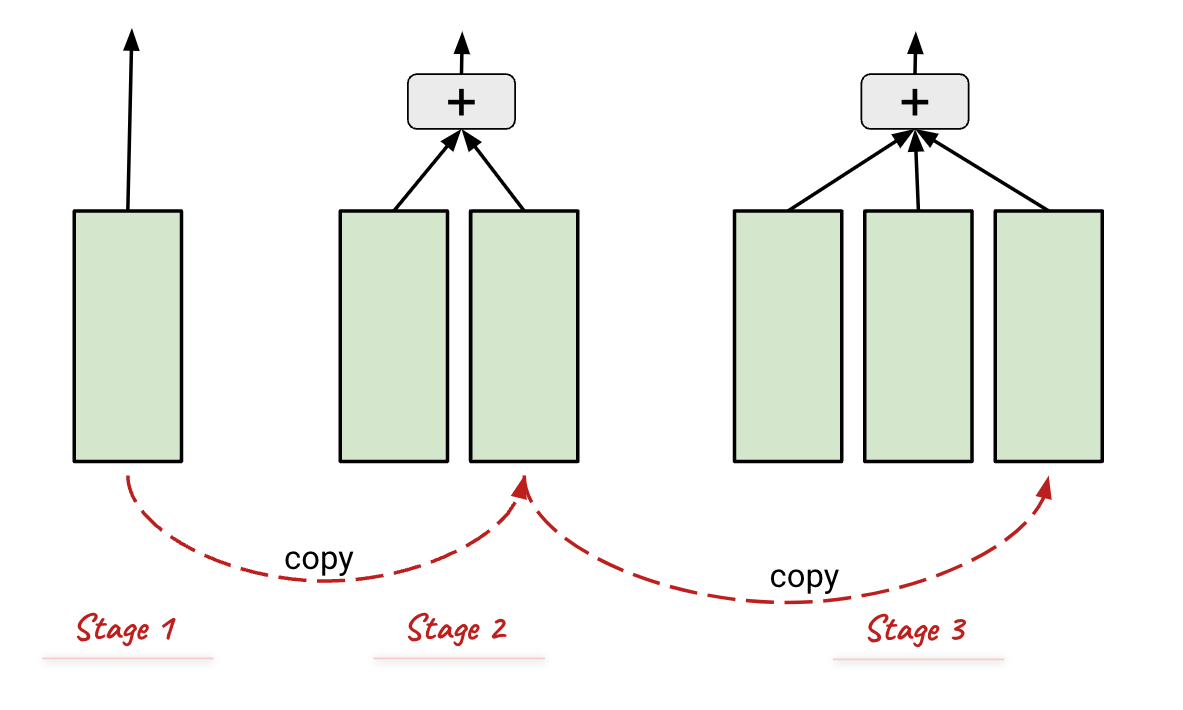}
  \caption{Stacking for additive models.}
  \label{fig:stacking-boosting}
\end{minipage}
\caption{Stacking for training language models and additive models. In each stage, a new transformer block/classifier is added, initialized with the parameters of the newest trained transformer block/classifier from the previous stage, and then trained for a certain number of steps.}
\label{fig:test}
\end{figure}
\fi

In the effort to reduce the massive costs of training these giant transformer models, greedy layer-wise pretraining has re-emerged as a very effective strategy in recent times. Specifically, a technique for initializing the new layers known as \emph{stacking} \cite{gong2019efficient, reddi2023efficient} has been shown to be very effective in speeding up training of deep transformer models. Stacking prescribes a heuristic for \emph{initializing} the newly added layers. Specifically, it prescribes that the newly added layers should be initialized by copying parameters from the previously trained layers.
\citet{gong2019efficient} proposed to double the model depth at each time by stacking an exact copy of the current model on top. \citet{reddi2023efficient} argue that doubling the model depth may be suboptimal and a better approach is gradual stacking where a few new layers (say 3-4) are added during each stage. These layers are initialized by copying the top most layers from the existing model. See Figure~\ref{fig:stacking-lm} for an illustration of this technique for training deep transformer models.

\if\arxiv0
\begin{wrapfigure}{r}{0.5\textwidth}
    \begin{center}
    \includegraphics[width=0.4\textwidth,height=4cm]{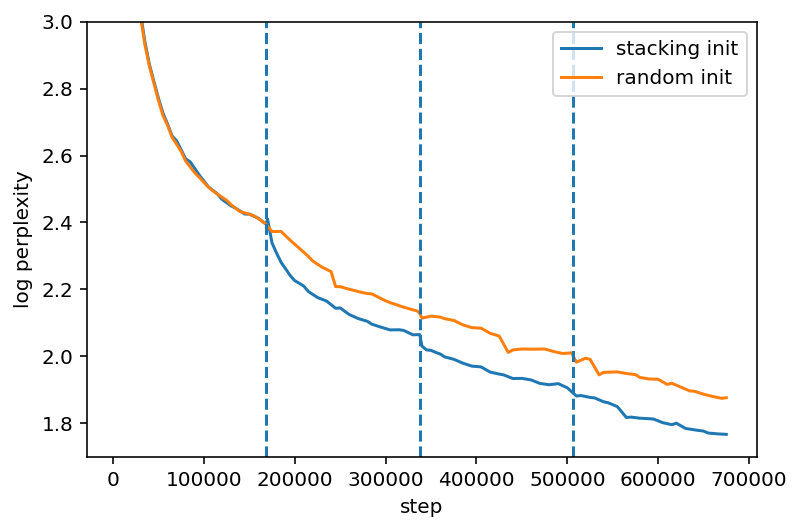}
    \end{center}
    \caption{\small Stacking init vs random init for stagewise training of BERT Base model. Four stages are used with 168,750 steps in each stage. Stage boundaries are marked by vertical dashed lines. Stacking init provides a clear benefit over random init.}
    \label{fig:bert-base}
\end{wrapfigure}
\fi
\if\arxiv1
\begin{figure}[!h]
    \centering
    \includegraphics[width=0.7\textwidth]{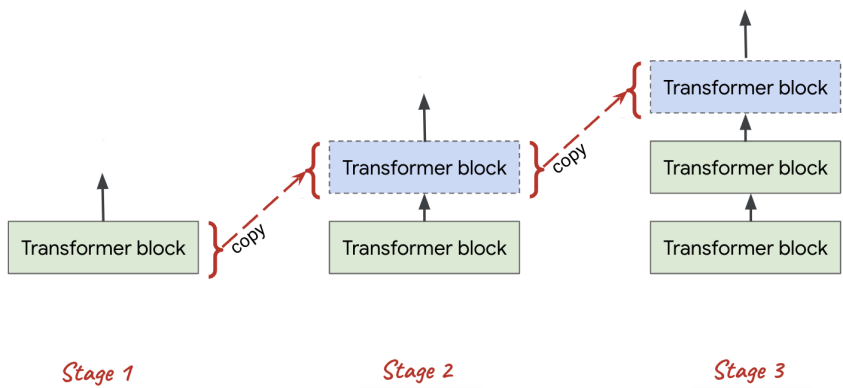}
    \caption{Stacking for stagewise training language models. In each stage, a new transformer block is added, initialized with the parameters of the top block from the previous stage, and then trained for a certain number of steps.}
    \label{fig:stacking-lm}
\end{figure}
\fi

The classical greedy layer-wise pretraining strategy doesn't have a specific prescription for initializing the new layers. In general, they're initialized randomly in some standard fashion. Stacking initialization provides a clear benefit over random initialization: Figure~\ref{fig:bert-base} shows one example of this effect, for training the BERT Base \cite{devlin} model with 4 stages of stagewise training.

\if\arxiv1
\begin{figure}[!h]
    \centering
    \includegraphics[width=0.55\textwidth]{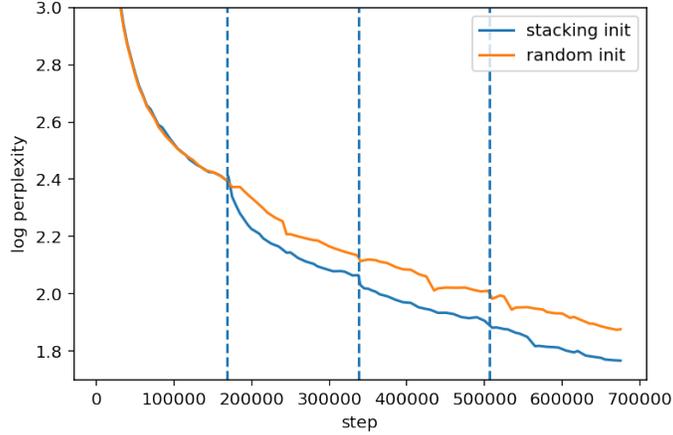}
    \caption{Stacking init vs random init for stagewise training of BERT Base model. Four stages are used with 168,750 steps in each stage. Stage boundaries are marked by vertical dashed lines. Stacking init provides a clear benefit over random init.}
    \label{fig:bert-base}
\end{figure}
\fi

Structurally, greedy layer-wise pretraining resembles another classical technique, viz. \emph{boosting}. In boosting, an additive ensemble of classifiers is constructed via greedy stagewise training of classifiers in the same greedy manner. Boosting algorithms such as AdaBoost \cite{freund1997decision} and Gradient Boosting \cite{friedman2001greedy} have found tremendous practical application, especially when using decision trees as base classifiers (e.g. XGBoost \cite{chen-guestrin}). A heuristic similar to stacking has also found practical application in boosting algorithms. The heuristic is to initialize each new classifier (e.g. a decision tree) by copying over the just-trained classifier and then updating it using new training data. This process is illustrated in Figure~\ref{fig:stacking-boosting}. Due to the similarity with stacking for training deep transformer models, in the rest of the paper we use ``stacking'' to also refer to this initialization strategy in the context of boosting.
\if\arxiv1
\begin{figure}[!h]
    \centering
    \includegraphics[width=0.6\textwidth]{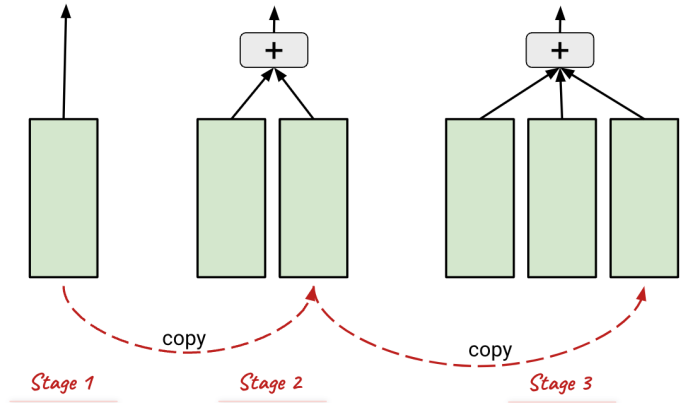}
    \caption{Stacking for boosting. In each stage, a new classifier is added, initialized with the parameters of the last trained classifier from the previous stage, and then trained for a certain number of steps.}
    \label{fig:stacking-boosting}
\end{figure}
\fi

While stacking based methods lead to impressive speed up in training of transformer models and additive models in boosting, we currently do not have a good theoretical understanding of why this is the case. Recently, \citet{reddi2023efficient} provided a theoretical explanation based on the assumption that each transformer block is a good few-shot learner. This assumption, along with a few others, are then used to conclude copying parameters as stacking does leads to fast learning. However, the assumptions made in the paper are fairly strong and hard to verify. 

In this work, we make progress on developing a theoretical understanding of the efficacy of stacking by studying it from an optimization perspective. In particular, our main contribution is that when viewed from the perspective of function optimization,  stacking speeds up stagewise training by enabling a form of the \emph{accelerated} gradient descent method (AGD) developed by Nesterov \citep{nesterov1983method}. In other words, each stage of the stacking-initialized stagewise training procedure will reduce the training loss at an accelerated rate. 

In contrast, we also show that without using any form of initialization, or in other words, initializing the new block/classifier to implement the zero function, stagewise training simply recovers usual (non-accelerated) gradient descent, whereas random initialization recovers stochastic gradient descent on a smoothed version of the loss. Hence, stacking initialization accelerates stagewise training over  zero or random initialization. In more detail, our contributions are as follows:

\if\arxiv0
\begin{enumerate}[leftmargin=*]
\else
\begin{enumerate}
\fi
    \item We propose a general theoretical framework towards learning a prediction function $F$ via an ensemble, i.e., a sequence of functions $(f_1, f_2, \ldots, f_T)$ in a greedy stagewise manner. The generality of our framework lets us unify classical approaches such as {\em boosting} \citep{freund1997decision, friedman2001greedy} that build the ensemble in an additive manner, and modern approaches that build the ensemble via stagewise training of \emph{residual} function compositions (e.g. ResNets \cite{kaiminghe} and Transformer models \cite{vaswani2017attention}).
    
    \item Our proposed framework lets us formally establish the connection between various initialization strategies used for building the ensemble and the convergence properties of the resulting overall learning procedure. In particular, we show that the zero initialization strategy recovers the vanilla functional gradient descent algorithm, for both the additive (i.e. boosting) and residual compositional forms of learning, whereas random initialization recovers \emph{stochastic} functional gradient descent (on a smoothed loss) for both types of models. Furthermore, in the case of additive models, the use of the popular stacking initialization exactly recovers Nesterov's accelerated functional gradient descent. The consequence is that for $T$ stages of boosting with stacking initialization, loss reduces at a rate of $O(T^{-2})$ for smooth losses, or $\exp(-\Omega(T/\sqrt{\kappa}))$ for smooth and strongly-convex losses with condition number $\sqrt{\kappa}$, as opposed to rates of $O(T^{-1})$ and $\exp(-\Omega(T/\kappa))$ respectively for zero initialization.

    \item For the case of compositional models, we show that stacking initialization results in updates that look remarkably similar to Nesterov's accelerated functional gradient descent. Proving an accelerated rate in the general non-parametric functional setting seems intractable, so we analyze stacking in a special parametric setting of deep linear networks with a convex loss function. In this setting we prove (Theorem~\ref{theorem:stronglyconvexconvergence}) that the stacking initialization quantitatively leads to the same kind of convergence benefits over vanilla gradient descent as is observed for Nesterov's accelerated method. At the core of our proof is a novel potential function based analysis of Nesterov's method with errors in the momentum term that may be of independent interest (c.f. Lemma~\ref{lemma:stronglyconvexpotentialarg}).
    
    \item We perform proof-of-concept experiments (in \if\arxiv0{Appendixes~\ref{sec:experiments_main} and \ref{app:additional-expts}}\else{Section~\ref{sec:experiments_main}}\fi{}) to validate our theory on synthetic and real world data. 
\end{enumerate}

\subsection{Related work}
\label{sec:related}

Boosting is a classical technique for constructing additive ensembles via greedy stagewise training, and has a long and rich history of work. We refer the interested reader to the excellent textbook of \citet{freundschapirebook} for the literature on this topic.

The idea of training deep residual networks in a layer wise manner has been explored in many prior works. In earlier studies \citep{hinton2006fast, bengio2006greedy} the focus was on greedily adding trained layers to the model while keeping the bottom layers frozen followed by a final fine tuning step where the entire network is trained. In recent years progressive or gradual stacking \citep{gong2019efficient, gu2020transformer, shen2022staged, reddi2023efficient} has emerged as a powerful way to train deep networks especially transformer based architectures.

The empirical insight of \citet{gong2019efficient} was that the attention patterns in neighboring layers of trained transformer models show remarkable similarity. Hence, by copying the parameters from the previous layer one is providing a better initialization for the optimization procedure. As mentioned previously, \citet{reddi2023efficient} developed the gradual stacking approach based on the assumption that the trained transformer blocks are good few-shot learners, and showed that gradual stacking leads to significant wallclock improvements during training.

\section{Stagewise training as functional gradient descent}
\label{sec:functional}

\paragraph{Preliminaries.}
We consider a fairly general supervised learning setting. Denote the input space by $\mathcal{X}$ and the output space by $\mathcal{Y}$. Examples $(x, y) \in \mathcal{X} \times \mathcal{Y}$ are drawn from a distribution $D$ (which may simply be the empirical data distribution in the case of empirical risk minimization). We aim to model the input-output relationship via \emph{predictions} in $\mathbb{R}^d$, for some dimension parameter $d$. Given an example $(x, y) \in \mathcal{X} \times \mathcal{Y}$ the quality of a prediction $\hat{y} \in \mathbb{R}^d$ is measured via a loss function $\ell: \mathbb{R}^d \times \mathcal{Y} \to \mathbb{R}$. Predictions are computed using functions $f: \mathcal{X} \to \mathbb{R}^d$. We will assume that the predictor functions $f$ are square integrable with respect to $D$, i.e. $\E_{(x, y) \sim D}[\|f(x)\|^2] < \infty$. The space of such functions forms a Hilbert space, denoted $\mathcal{L}_2$, with the inner product defined as $\langle f, g\rangle = \E_{(x, y) \sim D}[\langle f(x), g(x)\rangle]$. Unless specified otherwise, all functions in the subsequent discussion will be assumed to be in $\mathcal{L}_2$. The loss function can then naturally be extended to predictor functions $f \in \mathcal{L}_2$ by defining, with some abuse of notation, $\ell(f) := \E_{(x, y) \sim D}[\ell(f(x), y)]$. The goal of training is to obtain a function $f \in \mathcal{L}_2$ that minimizes $\ell(f)$.

In the rest of this section, we perform the analysis in a purely functional setting, which affords a convenient analysis. However, we note that, in practice, functions are parameterized (say by neural networks) and hence update rules for functions may not always be realizable via the specific parameterization used. The functional setting allows us to sidestep realizability issues and to focus on the conceptual message that stacking initialization enables accelerated updates.

We now define a general ensemble learning setup within the above setting. In this setup, we aim to approximate the minimizer of $\ell$ on $\mathcal{L}_2$ via an \emph{ensemble}, which is a sequence of functions $(f_1, f_2, \ldots, f_T)$, where $T > 0$ is a given parameter defining the size of the ensemble.
The functions in the ensemble are typically ``simple'' in the sense that they are chosen from a class of functions that is easy to optimize over.
A predictor function can be obtained from an ensemble $(f_1, f_2, \dots, f_T)$ by aggregating its constituent functions into a single function $F_T: \mathcal{X} \to \mathbb{R}^d$.
The loss of an ensemble can then be defined (again with some abuse of notation) in terms of its aggregation as $\ell((f_1, f_2, \ldots, f_T)) := \ell(F_T)$.
Two specific aggregation operators we consider are the following:
\begin{enumerate}[leftmargin=*]
    \item \textbf{Addition:} (E.g. boosting.) This is a summation over ensemble outputs: $F_T = f_1 + f_2 + \cdots f_T$.
    \item \textbf{Residual composition:} (E.g. deep residual neural networks.) This is a composed function $F_T = (I + f_T) \circ (I + f_{T-1}) \circ \cdots \circ (I + f_1)$, where the domain is $\mathcal{X} = \mathbb{R}^d$ and $I: \mathbb{R}^d \to \mathbb{R}^d$ is the identity mapping.
\end{enumerate}

\paragraph{Greedy stagewise training.}
\emph{Stagewise training} is a simple greedy procedure to train ensembles in a progressive manner. Suppose we have already obtained a (partial) ensemble $(f_1, f_2, \ldots, f_t)$. Then, the next function in the ensemble, $f_{t+1}$, is \emph{ideally} obtained by minimizing the loss of the new ensemble, i.e. $f_{t+1} = \arg\min_{f} \ell((f_1, f_2, \ldots, f_t, f))$. 

However, in practice, this ideal is hard to implement, and instead two heuristics are commonly used: (a) the new function to be trained is \emph{initialized} in some carefully chosen manner, and (b) the optimization above is done using \emph{early stopping}, i.e. a few steps of gradient descent, which ensures that the new function stays close to initialization. We analyze these heuristics in a functional optimization setting as follows.

First, we assume that the function $f_{t+1}$ to be trained is initialized at some carefully chosen value $f_{t+1}^0$. For notational convenience, we denote the aggregation of the ensemble $(f_1, f_2, \ldots, f_t, f_{t+1}^0)$ by $F_{t+1}^0$ and that of the generic ensemble $(f_1, f_2, \ldots, f_t, f)$ by $F$. 

Next, we note that an exact analysis for early stopping quickly becomes technically intractable. Instead, for a theoretical analysis, we \emph{model} the heuristic of early stopping by using $\ell_2$ regularization around the initialization and linearizing the loss near the initialization, as follows. It is known (see, e.g. \citep[Section 7.8]{Goodfellow-et-al-2016}) that early stopping acts as a form of $\ell_2$ regularization  which ensures that the trained function $f_{t+1}$ remains close to its initialization, $f_{t+1}^0$, which implies that $F_{t+1}$ remains close to $F_{t+1}^0$. Thus, early stopping can be modeled as minimizing $\ell(F) + \frac{\lambda}{2}\|F - F_{t+1}^0\|^2$, for some regularization parameter $\lambda$. Further, since the trained function remains close to the initialization, we also approximate $\ell(F)$ by its linearization around the initialization:
$\ell(F) \approx \ell(F_{t+1}^0) + \langle \nabla \ell(F_{t+1}^0), F - F_{t+1}^0\rangle$. Here, $\nabla \ell(\cdot)$ is the Fr\'echet derivative, and $\langle \cdot, \cdot \rangle$ denotes the inner product in $\mathcal{L}_2$. 
These considerations lead to the following key modeling assumption.
\begin{assumption}
The result of the early stopped training is given by
\begin{align*}
F_{t+1} = \arg\min_{F \in \mathcal{L}_2} \ell(F_{t+1}^0) + \langle \nabla \ell(F_{t+1}^0), F - F_{t+1}^0\rangle + \frac{\lambda}{2}\|F - F_{t+1}^0\|^2.
\end{align*}
In other words, 
\begin{equation} \label{eq:F_t+1-formula}
  F_{t+1} = F_{t+1}^0 - \frac{1}{\lambda}\nabla \ell(F_{t+1}^0).  
\end{equation}
\end{assumption}

We can now consider specific initialization strategies (i.e. zero initialization, random initialization, and stacking initialization) in the context of additive and residual compositional models and see how these initializations lead to various forms of functional gradient descent.

\paragraph{Stagewise training with zero initialization recovers functional gradient descent.}
First, consider stagewise training where functions are initialized to be zero functions, i.e. $f_{t+1}^0 = 0$. 
It is easy to see that with this initialization, for both additive and residual compositional models, we have $F_{t+1}^0 = F_t$. Thus, from \eqref{eq:F_t+1-formula}, we have that the updated ensemble's predictor can be written as
\[F_{t+1} = F_t - \frac{1}{\lambda}\nabla \ell(F_t).\]
This exactly describes functional gradient descent with step size $\frac{1}{\lambda}$. In the additive setting this is well-known: indeed, boosting can be seen as a functional gradient descent \cite{mason}. The result for the residual compositional setting appears to be new.

\paragraph{Stagewise training with random initialization recovers stochastic functional gradient descent on smoothed loss.} We now consider stagewise training where functions are initialized randomly, i.e. $f_{t+1}^0$ is a randomly drawn function, independent of all randomness up to stage $t$. In the following, we will assume that $\E[f_{t+1}^0] = 0$, where the $0$ on the RHS denotes the zero function. With this initialization, for both additive and residual compositional models, we have $F_{t+1}^0 = F_t + g_t$, where $g_t = f_{t+1}^0$ for additive models, and $g_t = f_{t+1}^0 \circ F_t$ for residual compositional models. In either case, note that $\E[g_t] = 0$. Now define the loss functional $\ell_t(F) := \E[\ell(F + g_t)]$. Since $\E[g_t] = 0$, we can interpret $\ell_t$ as a randomized smoothing of $\ell$, similar to convolving with a Gaussian. Then, from \eqref{eq:F_t+1-formula}, we have that the updated ensemble's predictor can be written as
\[F_{t+1} = F_t - \frac{1}{\lambda}(\nabla \ell(F_t + g_t) - \lambda g_t).\] 
Now, note that 
\[\E[\nabla \ell(F_t + g_t) - \lambda g_t] = \nabla\ell_t(F_t).\]
Or in other words, the above update can be seen as a \emph{stochastic} functional gradient descent step on the smoothed loss function $\ell_t$.

\noindent
\paragraph{Stagewise training with stacking initialization recovers accelerated functional gradient descent.}
We now consider stagewise training where functions are initialized in a stacking-like fashion with $f_{t+1}^0 = f_t$, which we will refer to as the \emph{stacking initialization}.
When the ensemble aggregation operator is addition, we have $F_{t+1}^0 = f_t + F_t = F_t + (F_t - F_{t-1})$ and hence \eqref{eq:F_t+1-formula} implies that the updated ensemble's predictor is
\[F_{t+1} = F_t + (F_t - F_{t-1}) - \frac{1}{\lambda} \nabla \ell(F_t + (F_t - F_{t-1})).\]
The above formula essentially describes Nesterov's accelerated gradient descent, which has the following update rule:
\begin{equation} \label{eq:nesterov}
    F_{t+1} = F_t + \beta(F_t - F_{t-1}) - \frac{1}{\lambda} \nabla \ell(F_t + \beta(F_t - F_{t-1})).
\end{equation}
Here, $\beta \in [0, 1)$ is a constant that can depend on $t$. In fact, we can \emph{exactly} recover Nesterov's accelerated gradient descent if we modify the stacking initialization to $f_{t+1}^0 = \beta f_t$. Thus, stacking enables accelerated descent for training additive models.

When the ensemble aggregation operator is residual composition, stagewise training with the stacking initialization $f_{t+1}^0 = f_t$ results in
\[F_{t+1}^0 = (I + f_t) \circ F_t = F_t + f_t \circ F_t.\]
Equation \eqref{eq:F_t+1-formula} therefore implies the updated ensemble's predictor is
\begin{equation} \label{eq:stacking-update}
F_{t+1} = F_t + f_t \circ F_t - \frac{1}{\lambda}\nabla \ell(F_t + f_t \circ F_t).
\end{equation}
In contrast, Nesterov's update rule \eqref{eq:nesterov}, and the fact that for residual compositional models, 
\[F_t - F_{t-1} = (I + f_t) \circ F_{t-1} = f_t \circ F_{t-1},\] 
yields the following equation for $F_{t+1}$:
\begin{equation} \label{eq:nesterov-comp}
    F_{t+1} = F_t + \beta f_t \circ F_{t-1} - \frac{1}{\lambda} \nabla \ell(F_t + \beta f_t \circ F_{t-1}).
\end{equation}
Comparing \eqref{eq:stacking-update} and \eqref{eq:nesterov-comp}, barring the minor difference in $\beta$ parameters, which can be easily rectified as in the case of the additive models by setting $f_{t+1}^0 = \beta f_t$, the major difference is that $f_t \circ F_t$ replaces $f_t \circ F_{t-1}$. Although possibly intractable to prove formally, we believe that the updates in \eqref{eq:stacking-update} also provide an accelerated convergence rate, since we expect $F_{t-1}$ to be close to $F_t$ as iterates converge to the optimal function. 

In the following section, we show that in certain deep linear networks, the above intuition is indeed correct and provide a rigorous proof that stacking provides an accelerated convergence rate.

\section{Accelerated convergence of deep linear networks by stacking}
\label{sec:deep-linear-theory}
To demonstrate that stacking can provide a provably accelerated rate of convergence, we now turn to studying the narrower setting of training deep residual linear networks, which are fully connected feedforward neural networks without non-linear activations and with residual connections. Such networks are a common subject of study in the theory of deep learning \cite{saxe_exact_13, kawaguchi_deep_16, hardt-ma}.
As they have no non-linear components, deep linear networks effectively compute a linear functions, albeit via a parametrization as a product of the weight matrices.

\paragraph{Setup.}
Consider again the general supervised learning setting from Section~\ref{sec:functional} and suppose, as is often the case in modern neural networks, that examples consist of inputs $x \in \reals^d$ and outputs $y \in \cY$. The loss function $\ell: \reals^d \times \cY \to \reals$ is assumed to be convex in the first argument. Let the samples be drawn from a distribution $D$ over $\reals^d \times \cY$. Then the expected loss of the linear predictor $x \mapsto Wx$ for a matrix $W \in \reals^{d \times d}$ is (with some abuse of notation) $\ell(W) := \E_{(x, y) \sim D}[\ell(Wx, y)]$. In the following, we will assume the expected loss $\ell(W)$ is $L$-smooth and $\mu$-strongly convex in $W$, by which we mean that the following inequalities hold for any $W, V \in \reals^{d \times d}$:
\begin{align*}
\ell(W) + \langle \nabla \ell(W), V-W\rangle + \frac{\mu}{2}\norm{W - V}^2 \leq \ell(V) \leq \ell(W) + \langle \nabla \ell(W), V-W\rangle + \frac{L}{2}\norm{W - V}^2.
\end{align*}
Here, for matrices $W, V \in \reals^d$, $\langle W, V \rangle = \text{Tr}(W^\top V)$, and $\norm{W}$ is the Frobenius norm of $W$.
The \emph{condition number} $\kappa$ of the loss is defined as $\kappa := \frac{L}{\mu}$.

The deep residual neural networks we consider have $t$ layers with weight matrices $w_1, w_2, \ldots, w_t$, and the function they compute is $x \mapsto W_tx$, where
\[W_t := (I + w_t) (I + w_{t-1}) \dots (I + w_1).\]
Here, $I \in \reals^{d \times d}$ is the identity matrix providing the residual connection. The expected loss of the neural network described above on the data is $\ell(W_t)$.

\paragraph{Derivation of stacking updates.}
Suppose we train the deep residual linear network described above using stacking initialization, but incorporating $\beta$-scaling: i.e., to train the $(t+1)$-th layer, its weight matrix is initialized to $w_{t+1}^0 = \beta w_t$, for some constant $\beta \in [0, 1]$, and then trained. Following the exact same steps as in the derivation of stacking updates in the functional setting of Section~\ref{sec:functional}, we end up with the following formula for $W_{t+1}$:
\[W_{t+1} = W_t + \beta w_t W_t - \frac{1}{\lambda}\nabla \ell(W_t + \beta w_t W_t).\]

When $W_{t-1}$ is non-singular, we have $w_t = W_tW_{t-1}^{-1} - I = (W_t - W_{t-1})W_{t-1}^{-1}$, so the above equation can be rewritten as
\begin{align}
W_{t+1} = &\; W_{t} + \beta (W_{t} - W_{t-1})W_{t-1}^{-1} W_{t} - \frac{1}{\lambda}\nabla \loss(W_{t} + \beta (W_{t} - W_{t-1})W_{t-1}^{-1} W_{t}).
    \label{eq:linearupdate}
\end{align}
As previously noted, \eqref{eq:linearupdate} differs from Nesterov's AGD method in form: Nesterov's AGD updates would be
\begin{align}
W_{t+1} &= W_{t} + \beta (W_{t} - W_{t-1}) - \frac{1}{\lambda}\nabla \loss(W_{t} + \beta (W_{t} - W_{t-1})).
\label{eq:nesterov-update}
\end{align}

\paragraph{Accelerated convergence for stacking updates.} 
Despite the differences with Nesterov's method, we can show that stacking notably still yields a provably accelerated convergence rate.
Let $W^* := \arg\min_W \ell(W)$. Suppose that $W^*$ is non-singular, i.e. its smallest singular value $\sigma_{\min}(W^*) > 0$.
Theorem~\ref{theorem:stronglyconvexconvergence} shows that as long as the first two layers are initialized so that $W_1$ and $W_2$ are close to optimal, stacking results in a suboptimality gap of $\exp(-\tilde\Omega(T/\sqrt{\kappa}))$ after $t$ stages of stacking. This is of the same order as the rate obtained by Nesterov's acceleration; note that, in comparison, stagewise training with zero initialization results would result in a suboptimality gap of $\exp(-\tilde\Omega(T/\kappa))$.
\begin{restatable}{theorem}{stronglyconvexconvergence}
\label{theorem:stronglyconvexconvergence}
Consider stagewise training with stacking initialization of a deep residual linear network in the setup described above with $\beta = \tfrac{\sqrt{\kappa}-1}{\sqrt{\kappa}+1}$ and $\lambda = \tfrac 1 L$. Suppose that the first layer weights are initialized so that $W_1 = V_0 - \frac{1}{L}\nabla \ell(V_0)$, where $V_0 \in \reals^{d \times d}$ satisfies $\ell(V_0) - \ell(W^*) \leq \delta$ for $\delta \asseq \tfrac{\mu\alpha^2\sigma_{\min}(W^*)^2}{256\beta^2 d}$ and $\alpha^{-1} \asseq (\kappa - 1)\sqrt{2 \sqrt{\kappa}( \kappa - 1)(\sqrt{\kappa} - 3)}$. Then after $T$ stages of stacking, we have 
\[\loss(W_T) - \loss(W^*) \leq \exp(-\tilde\Omega(T/\sqrt{\kappa})).\]
\end{restatable}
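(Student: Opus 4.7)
The plan is to reduce the stacking update in \eqref{eq:linearupdate} to a perturbed version of Nesterov's AGD \eqref{eq:nesterov-update} and then run a Lyapunov-style argument that tolerates the perturbation. Concretely, using $W_{t-1}^{-1}W_t = I + W_{t-1}^{-1}(W_t - W_{t-1})$, I would rewrite
\[\beta (W_t - W_{t-1})W_{t-1}^{-1}W_t \;=\; \beta (W_t - W_{t-1}) + E_t,\]
where $E_t := \beta (W_t - W_{t-1}) W_{t-1}^{-1}(W_t - W_{t-1})$. Thus the stacking update is exactly the Nesterov update \eqref{eq:nesterov-update} with an additive perturbation $E_t$ inserted both into the momentum step and into the gradient query point. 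The quadratic dependence on $W_t - W_{t-1}$ is the crucial structural observation: whenever the iterates are already close to $W^*$, $E_t$ is second-order small, hence negligible compared to the first-order momentum term.

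Next I would quantify $\|E_t\|$ in terms of suboptimality. Strong convexity gives $\|W_t - W^*\| \le \sqrt{2(\ell(W_t)-\ell(W^*))/\mu}$ and hence $\|W_t - W_{t-1}\| \le \sqrt{2(\ell(W_t)-\ell(W^*))/\mu} + \sqrt{2(\ell(W_{t-1})-\ell(W^*))/\mu}$. Provided $\sigma_{\min}(W_{t-1}) \ge \tfrac{1}{2}\sigma_{\min}(W^*)$ (which I will maintain inductively), this yields
\[\|E_t\| \;\le\; \frac{4\beta}{\mu\,\sigma_{\min}(W^*)}\,\bigl(\ell(W_t)+\ell(W_{t-1})-2\ell(W^*)\bigr).\]
The dimension factor in the initialization hypothesis enters here because $\sigma_{\min}(W_{t-1})$ must be controlled through the Frobenius bound $\|W_{t-1} - W^*\| \le \sqrt{2(\ell(W_{t-1})-\ell(W^*))/\mu}$, and I pay a $\sqrt{d}$ to pass between operator and Frobenius norms when I need spectral information.

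The core of the argument is a potential function analysis of Nesterov's AGD that allows errors in the momentum step. The standard strongly-convex potential takes the form
\[\Phi_t \;=\; \bigl(\ell(W_t)-\ell(W^*)\bigr) + \tfrac{\mu}{2}\|Z_t - W^*\|^2,\]
for an auxiliary sequence $Z_t$ coupled to $W_t$; for exact Nesterov's AGD one has $\Phi_{t+1} \le (1-1/\sqrt{\kappa})\Phi_t$. I would carry through the standard algebra with the perturbed update and show that the perturbation contributes at most an additive term proportional to $L\|E_t\|^2 + \|\nabla\ell(\cdot)\|\|E_t\|$, which by the estimate above is bounded by $C\cdot\tfrac{L}{\mu\sigma_{\min}(W^*)}\Phi_t \cdot \Phi_t^{1/2}$, i.e. a term of higher order in $\Phi_t$. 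Consequently one obtains a recursion of the shape
\[\Phi_{t+1} \le \Bigl(1-\tfrac{1}{\sqrt{\kappa}}\Bigr)\Phi_t + C'\,\Phi_t^{3/2},\]
which, as long as $\Phi_t$ remains below the threshold $O(\mu^5/(dL^4)\sigma_{\min}(W^*)^2)$ set by the hypothesis, still enjoys geometric contraction at rate $1 - \tfrac{1}{2\sqrt{\kappa}}$. An induction then simultaneously preserves (i) $\Phi_t$ below the threshold, (ii) $\sigma_{\min}(W_t) \ge \tfrac{1}{2}\sigma_{\min}(W^*)$, and (iii) the $\exp(-\tilde\Omega(T/\sqrt{\kappa}))$ decay; the base case is exactly the assumption on $W_1, W_2$.

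The main obstacle will be the perturbed potential analysis: standard Nesterov proofs rely on a precise cancellation between the momentum step and the gradient step, and admitting an error $E_t$ in the momentum breaks this cancellation in both the linear term (via the inner product with $\nabla\ell$) and the quadratic term (via $\|W_{t+1} - Z_{t+1}\|^2$). Getting the error terms to be absorbed into higher-order perturbations of $\Phi_t$, rather than additive noise that accumulates, is the delicate part and is what drives the stringent initialization requirement. Once this perturbed descent lemma is in hand, the induction and the translation to the $\exp(-\tilde\Omega(T/\sqrt{\kappa}))$ rate are routine.
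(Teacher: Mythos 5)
Your proposal follows essentially the same route as the paper's proof: the same decomposition of the stacking update \eqref{eq:linearupdate} into Nesterov's iteration plus a perturbation $\Delta_t = \beta (W_t - W_{t-1}) W_{t-1}^{-1}(W_t - W_{t-1})$ that is quadratic in the step, the same use of Weyl's inequality and a $\sqrt{d}$ factor to control $\norm{W_{t-1}^{-1}}$, and the same two-track induction that simultaneously keeps the suboptimality below a threshold (so the perturbation stays small relative to $\norm{W_t - W_{t-1}}$) and maintains invertibility of $W_{t-1}$. The one place where your execution genuinely differs is the perturbed-potential argument. You propose to keep the standard Nesterov potential $\Phi_t = (\loss(y_t) - \loss^*) + \tfrac{\mu}{2}\norm{z_t - x^*}^2$ and absorb the perturbation as a higher-order term, giving a recursion of the shape $\Phi_{t+1} \leq (1 - \tfrac{1}{\sqrt{\kappa}})\Phi_t + C' \Phi_t^{3/2}$. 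The paper's Lemma~\ref{lemma:stronglyconvexpotentialarg} instead \emph{augments} the potential with an extra penalty $2\rho\norm{y_t - x^*}^2$ (with $\rho = \tfrac{\mu}{4(4\sqrt{\kappa}-3)}$), chosen so that the perturbation's contribution $\norm{\Delta_t/\tau}^2 \leq \tfrac{\alpha^2}{\tau^2}\norm{y_t - y_{t-1}}^2 \leq \tfrac{2\alpha^2}{\tau^2}(\norm{y_t - x^*}^2 + \norm{y_{t-1}-x^*}^2)$ is exactly absorbed by the change in that new term, yielding a clean monotone decrease $\Phi(t) \leq \Phi(t-1)$ under an explicit perturbation budget $\norm{\Delta_t} = O(\kappa^{-2}\norm{y_t - y_{t-1}})$ that makes no reference to the current level of $\Phi_t$. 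Your route is not wrong --- the cross term $\inner{\tilde z_t - x^*}{\Delta_t / \tau}$ really is $O(\Phi_t^{3/2})$ and the quadratic term $O(\Phi_t^2)$ once the iterates are near the optimum, so your recursion contracts geometrically below the threshold --- but the augmented potential has the advantage of decoupling the robustness of Nesterov's method (Lemma~\ref{lemma:stronglyconvexpotentialarg}) from the stacking instance, turning it into a reusable standalone statement, whereas your bound intertwines the perturbation budget with $\Phi_t$ itself and so must be re-derived in situ. The step you flag as delicate is exactly Fact~\ref{fact:perturbedapprox} in the paper, and it is handled by this augmented potential rather than by a $\Phi_t^{3/2}$ absorption.
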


The $\tilde\Omega(\cdot)$ notation above hides polylogarithmic dependence on the problem parameters for clarity of presentation. Precise expressions can be found in the proof.
The primary insight behind the result of Theorem~\ref{theorem:stronglyconvexconvergence} is that Nesterov's accelerated gradient method is relatively robust to perturbations in its update rules.
This robustness is formalized below in Lemma~\ref{lemma:stronglyconvexpotentialarg}. The lemma is described in a fairly general, standalone setting since it may be of independent interest.
\begin{restatable}[Robustness of Nesterov's accelerated gradient method]{lemma}{stronglyconvexpotentialarg}
\label{lemma:stronglyconvexpotentialarg}
Let $\cF$ be a Hilbert space and $\loss: \cF \to \reals$ an $\lipschitz{}$-smooth and $\mu$-strongly convex function to be minimized on $\cF$.
Consider the iterates $x_0, y_0, \dots, x_T, y_T \in \cF$ with $x_0 = y_0$ chosen arbitrarily, and the update rules
\begin{align}
    y_{t+1}& \asseq x_t - \frac{1}{\lipschitz{}}\nabla \loss(x_t) \nonumber \\
    x_{t+1}& \asseq y_{t+1} + \beta(y_{t+1} - y_t) + \Delta_{t+1} \label{eq:nesterov-1}
\end{align}
where $\beta := \frac{\sqrt{\kappa}-1}{\sqrt{\kappa}+1}$, $\kappa = \tfrac{\lipschitz}{\mu}$, and $\Delta_1, \Delta_2, \dots, \Delta_{T-1} \in \cF$ are error terms such that 
$\norm{\Delta_{t}} \in O(\kappa^{-2} \norm{y_{t} - y_{t-1}})$
for all $t \in [T-1]$. Then the convergence rate of the iterates to a suboptimality gap of $\epsilon$ is of order $\tilde{O}(\sqrt{\kappa} \log(\kappa / \epsilon))$. Specifically, for any $T \geq 2$, $$\loss(y_T) - \loss(x^*) \leq (\tfrac{2 \sqrt{\kappa} - 2}{2 \sqrt{\kappa} - 1})^T \para{\tfrac {\lipschitz{}} 2  +\tfrac{8 \sqrt{\kappa} \mu}{4 (4 \sqrt{\kappa} - 3)}} \norm{x_0 - x^*}^2$$
and
$$\loss(y_T) - \loss(x^*) \leq (\tfrac{2 \sqrt{\kappa} - 2}{2 \sqrt{\kappa} - 1})^T \tfrac{8 \sqrt{\kappa}}{4  \sqrt{\kappa} - 3} (\loss(y_0) - \loss(x^*)).$$
\end{restatable}

We will apply Lemma~\ref{lemma:stronglyconvexpotentialarg} using the correspondence $x_0 = y_0 = W_0 = V_0$, and for all $t \geq 1$, $y_t = W_t$ and $x_t = W_t + \beta(W_t - W_{t-1})W_{t-1}^{-1}W_t$. Lemma~\ref{lemma:stronglyconvexpotentialarg}
implies that even though stagewise training with stacking differs from Nesterov's method, their similar form allows us to express the former as a perturbation of the latter.
In particular, if we write our stacking update \eqref{eq:linearupdate} for deep residual linear networks as a perturbation of Nesterov's method, as in \eqref{eq:nesterov-1}, the perturbation term is exactly %
\begin{align}
\label{eq:perturb}
    \Delta_{t} = \beta (W_{t} - W_{t-1}) W_{t-1}^{-1} W_{t} - \beta (W_t - W_{t-1})
\end{align}
for all $t \in [2, T-1]$.
That is, if we examine the sequence of iterates $W_1, \dots, W_T$ produced by stagewise training with stacking initialization, the term $\Delta_t$ is a measure of the disagreement between the realized iterate $W_{t+1}$ and what Nesterov's method says the iterate $W_{t+1}$ should be conditioned on the iterates $W_1, \dots, W_{t}$ from previous timesteps. 
We note that, even when these perturbation terms $\Delta_1, \dots, \Delta_{T-1}$ are small in norm, it is possible for Nesterov's method to describe an iterate sequence that diverges significantly in norm from the iterates realized by stagewise training.

Rewriting \eqref{eq:perturb} as
$\Delta_t = \beta (W_t - W_{t-1})W_{t-1}^{-1}(W_t - W_{t-1})$,
we immediately see that in order to satisfy the requirement of Lemma~\ref{lemma:stronglyconvexpotentialarg} that $\norm{\Delta_{t}} \in O(\kappa^{-2} \norm{W_{t} - W_{t-1}})$, we simply need $\norm{\beta W_{t-1}^{-1}(W_t - W_{t-1})} \in O(\kappa^{-2})$. This is satisfied when $W_t \approx W_{t-1}$ and $W_{t-1}$ is reasonably non-singular.
A sufficient condition for this is that the iterates are sufficiently close to the ground-truth solution $W^*$ and $W^*$ is non-singular, which explains the conditions of Theorem~\ref{theorem:stronglyconvexconvergence}.

We now prove Theorem~\ref{theorem:stronglyconvexconvergence} formally.

\begin{proof}[Proof of Theorem~\ref{theorem:stronglyconvexconvergence}]

As described above, the deep linear networks $W_1, \dots, W_T$, as defined in \eqref{eq:linearupdate}, can be written as iterates of a variant of Nesterov's acceleration \eqref{eq:nesterov-1}, where we set the gradient step size to be the usual $\lambda = \tfrac 1 L$ and the stacking parameter $\beta$ to match the usual momentum parameter setting of $\tfrac {\sqrt{\kappa} - 1} {\sqrt{\kappa}+1}$.

\paragraph{Sufficient claim.}
To prove the main result, it suffices to show that $\norm{\Delta_{t}} \leq \alpha \norm{W_{t} - W_{t-1}}$ for all $t \in [T-1]$.
With this claim Lemma~\ref{lemma:stronglyconvexpotentialarg} immediately gives a convergence rate of
\begin{align*}
    \loss(W_T) - \loss(W^*)
    \leq (\tfrac{2 \sqrt{\kappa} - 2}{2 \sqrt{\kappa} - 1})^{T} \para{\tfrac{\lipschitz{}}{2} +\tfrac{8 \sqrt{\kappa} \mu}{4 (4 \sqrt{\kappa} - 3)}} \norm{W_1 - W^*}^2.
\end{align*}
This gives the desired result $\loss(W_T) - \loss(W^*) \in \exp(-\Omega(\tfrac T {\sqrt{\kappa}} + \log(\tfrac T {\sqrt{\kappa}})))$ as
\begin{align*}
  \tfrac 1C(  \loss(W_T) - \loss(W^*)) 
  \leq (1 - \tfrac{1}{2 \sqrt{\kappa} - 1})^{T}  
\leq \exp(-\Omega(\tfrac T {\sqrt{\kappa}}))
\end{align*}
where $C = \para{\tfrac{\lipschitz{}}{2} +\tfrac{8 \sqrt{\kappa} \mu}{4 (4 \sqrt{\kappa} - 3)}} \norm{W_1 - W^*}^2 \in \exp(-\Omega(\log(\tfrac{T}{\sqrt{\kappa}})))$.

Thus, we need to show that $\norm{\Delta_{t}} \leq \alpha \norm{W_{t} - W_{t-1}}$ for all $t \in [T-1]$. For this, we use the following claim, where $\eta \asseq \sqrt{\frac{16\delta}{\mu}}$:
\begin{claim} \label{claim:closeness-to-bound}
Suppose that $\norm{W_{t-1} - W^*} \leq \eta$ and $\norm{W_t-W^*} \leq \eta$. Then $\norm{\Delta_{t}} \leq \alpha \norm{W_{t} - W_{t-1}}$.
\end{claim}
\begin{proof}[Proof of Claim~\ref{claim:closeness-to-bound}]
We have
$$\norm{\Delta_t} = \norm{\beta (W_t - W_{t-1})  (W_{t-1}^{-1} W_t - I)} \leq \beta \norm{W_t -W_{t-1}} \norm{W_{t-1}^{-1} W_t - I}.$$
We can further simplify right-most factor as follows:
\begin{align}
\norm{W_{t-1}^{-1} W_{t} - I} 
 &=  \norm{W_{t-1}^{-1}(W_{t} - W_{t-1})} \notag \\
 &\leq \norm{W_t - W_{t-1}} \norm{W_{t-1}^{-1}} & \text{(submultiplicative property)} \notag \\
 &\leq (\norm{W_t - W^*} + \norm{W_{t-1} - W^*}) \norm{W_{t-1}^{-1}}. & \text{(triangle inequality)} \label{eq:right-factor}
 \end{align}
 The following fact shows that $W_{t-1}$ is indeed far from singular as long as it is close enough to $W^*$.
\begin{fact}
\label{fact:inverse}
$W_{t-1}$ is invertible and $\norm{W_{t-1}^{-1}} \leq \frac{\sqrt{d}}{\sigma_{\min{}}(W^*) - \eta}$.
\end{fact}
\begin{proof}[Proof of Fact~\ref{fact:inverse}]
Since $\|\cdot\|$ is the Frobenius norm, we can upper bound the singular values of the matrix $W^* - W_{t-1}$ by $\sigma_{\max{}}(W^* - W_{t-1}) \leq \eta$, where we use $\sigma_{\max{}}(W)$  to denote the largest singular values of a matrix $W$.
We can then use Weil's inequality to argue that, for any $i \in [d]$, the difference between the $i$th largest singular values of $W_{t-1}$ and $W^*$ is upper bounded by $$\abs{\sigma_i(W_{t-1}) - \sigma_i(W^*)} \leq \sigma_{\max}(W_{t-1} - W^*) \leq \eta.$$
The smallest singular value of $W_{t-1}$ is thus at least $\sigma_{\min{}}(W_{t-1}) \geq \sigma_{\min{}}(W^*) - \eta$.
Since our choice of $\delta$ guarantees that $\eta \leq \sigma_{\min{}}(W^*)$, we have that $W_{t-1} > 0$ and is invertible.
This also implies that the largest singular value of $W_{t-1}^{-1}$ is at most $ \sigma_{\max{}}(W_{t-1}^{-1})  \leq \frac{1}{\sigma_{\min{}}(W^*) - \eta}$.
We can therefore upper bound the Frobenius norm of $W_{t-1}^{-1}$ as claimed by $$\norm{W_{t-1}^{-1}} \leq \sqrt{\sum_{i \in [d]} \sigma_i(W_{t-1}^{-1})^2} \leq \sqrt{d} \sigma_{\max{}}(W_{t-1}^{-1}) \leq \sqrt{d} \frac{1}{\sigma_{\min{}}(W^*) - \eta}.$$
\end{proof}

Using Fact~\ref{fact:inverse} and \eqref{eq:right-factor}, we can now bound
\begin{align*}
\norm{W_{t-1}^{-1} W_{t} - I}
 &\leq 2 \eta \sqrt{d} \frac{1}{\sigma_{\min}(W^*) - \eta}.
\end{align*}
Since $\eta = \sqrt{\frac{16\delta}{\mu}}$, we have by the definition of $\delta$ that  $2 \eta \sqrt{d} \frac{1}{\sigma_{\min}(W^*) - \eta} \leq \tfrac \alpha \beta$.
This concludes our proof of the claim as
$$\norm{\Delta_t} \leq \beta \cdot \norm{W_t -W_{t-1}} \norm{W_{t-1}^{-1} W_t - I} \leq \beta \cdot \tfrac \alpha \beta \|W_t - W_{t-1}\|.$$
\end{proof}

We can now complete the proof by showing the following claim via induction on $t$:
\begin{claim} \label{claim:induction}
We have $\norm{\Delta_{t}} \leq \alpha \norm{W_{t} - W_{t-1}}$ and $\norm{W_t - W^*} \leq \eta$.
\end{claim}

\begin{proof}[Proof of Claim~\ref{claim:induction}]
\if\arxiv1
We proceed with induction on $t$.

\item\fi
\paragraph{Base case: $t=1$.}
Note that $W_0 = V_0$. We have 
\[\tfrac{\mu}{2}\norm{W_0 - W^*}^2 \leq \ell(W_0) - \ell(W^*) \leq \delta,\]
which implies that $\norm{W_0 - W^*} \leq \sqrt{\frac{2\delta}{\mu}} \leq \eta$. Furthermore, since $W_1 = W_0 - \frac{1}{L}\nabla \ell(W_0)$ and $\ell$ is $L$-smooth, we have $\ell(W_1) \leq \ell(W_0)$, which implies, exactly as above, that $\norm{W_1 - W^*} \leq \eta$. Thus, by Claim~\ref{claim:closeness-to-bound}, we conclude that $\norm{\Delta_{1}} \leq \alpha \norm{W_{1} - W_{0}}$.

\if\arxiv1\item\fi
\paragraph{Inductive step.}
Fixing $t \in [2, T-1]$, and assume, as our inductive hypothesis, that $\norm{\Delta_\tau} \leq \alpha \norm{W_{\tau} - W_{\tau-1}}$ and $\norm{W_\tau - W^*} \leq \eta$ for all $\tau \in [t-1]$.
Due to this inductive hypothesis, we can invoke Lemma~\ref{lemma:stronglyconvexpotentialarg} to observe that for all $\tau \in [T-1]$:
\begin{align*}
\tfrac \mu 2 \norm{W_{t} - W^*}^2 &\leq \loss(W_t) - \loss(W^*) \\
&\leq (\tfrac{2 \sqrt{\kappa} - 2}{2 \sqrt{\kappa} - 1})^t \tfrac{8 \sqrt{\kappa}}{4  \sqrt{\kappa} - 3} (\loss(W_0) - \loss(W^*)) \\
&\leq  8\delta.
\end{align*}
Thus, we have $\|W_t - W^*\| \leq \sqrt{\frac{16\delta}{\mu}} = \eta$. Since $\norm{W_{t-1} - W^*} \leq \eta$ due to our induction hypothesis, we can now use Claim~\ref{claim:closeness-to-bound} to conclude that $\norm{\Delta_t} \leq \alpha \norm{W_t - W_{t-1}}$. This completes the inductive proof.

\end{proof}

\end{proof}

Theorem~\ref{theorem:stronglyconvexconvergence} presents a \emph{local} convergence result: it assumes that the initial two layers put the network in the vicinity of the optimal solution $W^*$. We can also provide a \emph{global} convergence result by using a small warmup period phase to stacking---that is, by training the first few (only a constant number) stages of the deep linear network without stacking.
\begin{restatable}[Corollary of Theorem~\ref{theorem:stronglyconvexconvergence}]{corollary}{deeplinearnetworks}
\label{corollary:deeplinearnetworks}
Consider stagewise training of a deep residual linear network in the setup described above with an initial warmup phase of zero initialization for $\tilde{O}(\kappa)$ stages followed by stacking initialization for the remaining stages. Then after $T$ total stages, we have 
\[\loss(W_T) - \loss(W^*) \leq \exp(-\tilde\Omega(T/\sqrt{\kappa}) + \tilde O(\sqrt{\kappa})).\]
\end{restatable}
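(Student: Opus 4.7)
The plan is to reduce the global claim to the local guarantee of Theorem~\ref{theorem:stronglyconvexconvergence} by prepending a warmup phase whose only job is to drive the iterate inside the basin of attraction required by that theorem. Concretely, I would split the $T$ stages into a warmup of $T_w = \tilde O(\kappa)$ stages trained with zero initialization, followed by $T - T_w$ stages trained with the stacking initialization (with $\beta = (\sqrt{\kappa}-1)/(\sqrt{\kappa}+1)$ and $\lambda = 1/L$ as in Theorem~\ref{theorem:stronglyconvexconvergence}).

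First, I would specialize the zero-initialization analysis of Section~\ref{sec:functional} to deep residual linear networks. Setting $f_{t+1}^0 = 0$ gives $W_{t+1}^0 = W_t$ and hence $W_{t+1} = W_t - \tfrac{1}{\lambda} \nabla \loss(W_t)$, which is vanilla gradient descent on the $L$-smooth, $\mu$-strongly convex objective $\loss$. The standard contraction $\loss(W_{t+1}) - \loss(W^*) \leq (1 - 1/\kappa)(\loss(W_t) - \loss(W^*))$ applies, so after $T_w = O(\kappa \log (( \loss(W_0) - \loss(W^*))/\epsilon^*)) = \tilde O(\kappa)$ warmup stages, the suboptimality drops below the threshold $\epsilon^* = \Theta(\mu^5 \sigma_{\min}(W^*)^2 / (d L^4))$ required by the hypothesis of Theorem~\ref{theorem:stronglyconvexconvergence}. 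Since gradient descent is monotonically decreasing in $\loss$, the last two warmup iterates simultaneously satisfy the closeness condition and can serve as the $W_1, W_2$ inputs to Theorem~\ref{theorem:stronglyconvexconvergence}.

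Second, I would hand control over to stacking, with the endpoint of the warmup phase playing the role of the initial pair. Applying Theorem~\ref{theorem:stronglyconvexconvergence} to the remaining $T - T_w$ stages yields
\[
\loss(W_T) - \loss(W^*) \leq \exp(-\tilde\Omega((T - T_w)/\sqrt{\kappa})).
\]
Substituting $T_w = \tilde O(\kappa)$ and using $T_w/\sqrt{\kappa} = \tilde O(\sqrt{\kappa})$ gives $\loss(W_T) - \loss(W^*) \leq \exp(-\tilde\Omega(T/\sqrt{\kappa}) + \tilde O(\sqrt{\kappa}))$, which is exactly the claim.

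The only real obstacle is bookkeeping the polylogarithmic factors. The warmup length $T_w$ picks up a $\log(1/\epsilon^*)$ factor which, after expanding $\epsilon^*$, introduces polylogarithmic dependence on $d$, $L/\mu$, and $\sigma_{\min}(W^*)^{-1}$; these are precisely the terms absorbed into the $\tilde O(\kappa)$ bound on $T_w$, and subsequently into the additive $\tilde O(\sqrt{\kappa})$ in the exponent. One must also verify that the $\tilde\Omega$ rate from Theorem~\ref{theorem:stronglyconvexconvergence}, which hides its own polylogarithmic correction, composes cleanly when we shift the time origin by $T_w$; this is immediate because $\tilde\Omega((T - T_w)/\sqrt{\kappa}) = \tilde\Omega(T/\sqrt{\kappa}) - \tilde O(T_w/\sqrt{\kappa})$ up to the same polylog factors.
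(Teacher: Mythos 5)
Your proposal is essentially the same argument the paper uses: run zero-initialized stagewise training (which reduces to vanilla gradient descent) for $\tilde O(\kappa)$ warmup stages until the two most recent iterates fall below the suboptimality threshold $\Theta(\mu^5 \sigma_{\min}(W^*)^2 / (d L^4))$ required by Theorem~\ref{theorem:stronglyconvexconvergence}, then invoke that theorem on the remaining stages and absorb the warmup cost as the additive $\tilde O(\sqrt{\kappa})$ in the exponent. Your explicit appeal to the monotone decrease of gradient descent to handle both $W_{T_w-1}$ and $W_{T_w}$ simultaneously is a small, clean touch but not a departure from the paper's argument.
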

\if\arxiv1

\begin{proof}
We first recall the standard convergence rate of gradient descent.
In the same setting as Lemma~\ref{lemma:stronglyconvexpotentialarg} where we minimize a $L$-smooth and $\mu$-strongly convex function on some Hilbert space $\cF$, we can define gradient descent iterates with the update rule $x_{t+1} = x_t - \tfrac 1 L \nabla \loss(x_t)$.
For the resulting sequence of iterates $x_0, \dots, x_T$, it is known that $\loss(x_T) - \loss(x^*) \leq \exp(-T / \kappa) (\loss(x_0) - \loss(x^*))$; see e.g. \cite{gupta2017potential}.

We can further recall that, in the stagewise training of deep residual linear networks, we can write networks as $W_{t+1} = W_t + w_{t+1}^0 W_t - \tfrac 1 L \nabla \loss(W_t + w_{t+1}^0 W_t)$.
In the initial stages where new layers are initialized with zero weights, i.e. $w_{t+1}^0 = 0$, we recover, as mentioned in Section~\ref{sec:functional}, a gradient descent on $\reals^{d \times d}$:
\begin{align*}
    W_{t+1} = W_t - \tfrac 1 L \nabla \loss(W_t).
\end{align*}

Putting these two pieces together, we have $\loss(W_{t}) - \loss(W^*) \leq \exp(-\tfrac{(t-1)}{\kappa}) (\loss(W_1) - \loss(W^*))$.
By setting 
$$T_0 \geq \kappa \log( \tfrac{\loss(W_1) - \loss(W^*)}{\delta})+2,$$
we have $\ell(W_{T_0-1}) - \ell(W^*) \leq \delta$. So by setting $V_0 = W_{T_0-1}$, and noting that $W_{T_0} = W_{T_0-1} - \frac{1}{L}\nabla \ell(W_{T_0-1})$, we can now apply the local convergence result of Theorem~\ref{theorem:stronglyconvexconvergence}, and conclude that performing $T'$ rounds of stacking after $T_0$ rounds of warm-start leads to the desired loss bound \begin{align*}
    \loss(W_{T_0 + T'}) - \loss(W^*) \leq \exp(-\tilde \Omega(T' /\sqrt{\kappa})) &= \exp(-\tilde \Omega(T /\sqrt{\kappa}) + O(T_0 / \sqrt{\kappa})) \\
    &= \exp(-\tilde \Omega(T /\sqrt{\kappa}) + \tilde O(\sqrt{\kappa})).
\end{align*}
\end{proof}

\paragraph{Key ingredient of proof: robustness of Nesterov's accelerated gradient descent method.}
We now turn to proving Lemma~\ref{lemma:stronglyconvexpotentialarg}.

\begin{proof}[Proof of Lemma~\ref{lemma:stronglyconvexpotentialarg}]
In this proof, we will assume that the norm of the perturbation term is bounded by \begin{align}
    \|\Delta_{t}\| \leq \alpha \|y_{t} - y_{t-1}\|,
    \label{eq:perturbationbound}
\end{align}
where $\alpha$ is defined in the same way as in Theorem~\ref{theorem:stronglyconvexconvergence}, namely 
\[\alpha^{-1} \asseq (\kappa - 1)\sqrt{2 \sqrt{\kappa}( \kappa - 1)(\sqrt{\kappa} - 3)} .\]
We define the coefficients $\tau := \frac{1}{\sqrt{\kappa}+1}$, $\gamma := \frac{1}{\sqrt{\kappa}-1}$, and $\rho = \frac{\mu \gamma}{4(4 + \gamma)} = \frac{\mu}{4 (4 \sqrt{\kappa} - 3)}$.
$\tau$ can be understood as a momentum parameter, $\gamma$ as (proportional to) the parameter of the exponential curve that is our convergence rate, and $\rho$ a penalty for the presence of the perturbations $\Delta_t$.
As is done in all proofs of Nesterov's method, we will define the iterates 
\[z_{t} := \tfrac{1}{\tau}x_{t} - \tfrac{1-\tau}{\tau}y_{t}\]
for all $t \in [T]$; we refer interested readers to \citet{zhu2017, ahn2022} for interpretations of these iterates.
We note that although our definition of $z_{T}$ depends on $x_T$, and by extension $\Delta_T$, this lemma's claim about $y_T$ does not directly depend at all on $\Delta_T$.
Thus, we will freely choose to set $\Delta_T = 0$, guaranteeing that $\norm{\Delta_T} \leq \alpha \norm{y_{T} - y_{T-1}}$.

To prove the statement of this lemma, it suffices to show the following claim.
\begin{claim}
\label{claim:potential}
The following function $\Phi$ is a potential function:
\begin{equation}
\label{eq:lyapunov}
    \Phi(t) = (1+\tfrac{\gamma}{2})^t[(\loss(y_t)  - \loss(x^*)) + \tfrac{\mu}{2}\|z_t - x^*\|^2 + 2\rho  \|y_t -x^*\|^2].
\end{equation}
That is, for all $t \in [T]$, $\Phi(t) - \Phi(t-1) \leq 0$.
\end{claim}
It suffices to show Claim~\ref{claim:potential} because the consequence that $\Phi(T) \leq \Phi(0)$ directly implies the lemma's first statement via
\begin{align*}
    (1 + \tfrac{\gamma}{2})^T (\loss(y_T) - \loss(x^*))
    &\leq \loss(x_0) - \loss(x^*) + (\frac{\mu}{2} + 2 \rho) \norm{x_0 - x^*}^2 \leq (\tfrac{\lipschitz{} + \mu}{2} + 2 \rho) \norm{x_0 - x^*}^2,
\end{align*}
with the last inequality following from the $\lipschitz{}$-smoothness of $\loss$. The second statement follows similarly via
\begin{align*}
    (1 + \tfrac{\gamma}{2})^T (\loss(y_T) - \loss(x^*))
    &\leq \loss(x_0) - \loss(x^*) + (\frac{\mu}{2} + 2 \rho) \norm{x_0 - x^*}^2 \leq (2 + \tfrac{4 \rho}{\mu}) (\loss(x_0) - \loss(x^*)).
\end{align*}
We therefore turn to showing  Claim~\ref{claim:potential}.

First, we note that the potential function \eqref{eq:lyapunov} we are proving differs from the usual potential function that is used to prove Nesterov's acceleration, which we will denote by $\Phi_{\mathrm{orig}}$:
\begin{align*}
    \Phi_{\mathrm{orig}}(t) = (1 + \gamma)^t [(\loss(y_t) - \loss(x^*)) + \tfrac \mu 2 \norm{z_t - x^*}^2].
\end{align*}
Fact~\ref{fact:perturbedpotential} says roughly that, for any $t \in [T]$, we can hypothetically recover a large part of the usual proof of Nesterov's acceleration by bounding the difference $\Phi_{\mathrm{orig}}(t) - \Phi_{\mathrm{orig}}(t-1) \leq 0$, if only we could remove the perturbation at timestep $t$, i.e. set $\Delta_{t} = 0$ (but keep in place the perturbations $\Delta_1, \dots, \Delta_{t-1}$ from previous timesteps).
\begin{fact}
\label{fact:perturbedpotential}
Fix any $t \in [T]$ and let $\tilde z_{t} \asseq \tfrac{1}{\tau}(y_{t} + \beta(y_{t} - y_{t-1})) - \tfrac{1-\tau}{\tau}y_{t}$.
Then,
\begin{equation}
    (1+\gamma)(\loss(y_{t})-\loss(x^*)) - (\loss(y_{t-1})-\loss(x^*)) + \tfrac{\mu}{2}\left((1+\gamma)\|\tilde{z}_{t}-x^*\|^2-\|z_{t-1}-x^*\|^2\right) \leq 0 \label{eq:potential-drop}
\end{equation}
\end{fact}

Next, in Fact~\ref{fact:perturbedapprox}, we show that, since we can argue that $\Phi_{\mathrm{orig}}(t) \leq \Phi_{\mathrm{orig}}(t-1)$ if we ignore the perturbation $\Delta_{t}$, we can argue that $\Phi(t) \leq \Phi(t-1)$ even taking the perturbation $\Delta_{t}$ into account.
That is, Fact~\ref{fact:perturbedapprox} shows that the left-hand side of \eqref{eq:potential-drop} upper bounds $\Phi(t) - \Phi(t-1)$, so long as our stated assumption on the perturbation norm $\norm{\Delta_{t}}$ holds.

\begin{fact}
\label{fact:perturbedapprox}
For any $t \in [T]$, the left-hand side of \eqref{eq:potential-drop} upper bounds $\Phi(t) - \Phi(t-1)$; equivalently,
\begin{align}
    &(1 + \tfrac{\gamma}{2})(\tfrac{\mu}{2} \norm{z_{t} - x^*}^2  + 2 \rho \norm{y_{t} - x^*}^2) - 2 \rho \norm{y_{t-1} - x^*}^2 \nonumber \\
 &   \leq \tfrac \gamma 2 (\loss(y_{t}) - \loss(x^*)) +  (1 + \gamma) \tfrac{\mu}{2} \norm{\tilde z_{t} - x^*}^2. \label{eq:perturbedapprox}
\end{align}
\end{fact}

Plugging Fact~\ref{fact:perturbedapprox}'s \eqref{eq:perturbedapprox} into Fact~\ref{fact:perturbedpotential}'s \eqref{eq:potential-drop}, we recover our main claim that $\Phi(t) - \Phi(t-1) \leq 0$.
We conclude by turning to prove Facts~\ref{fact:perturbedpotential} and \ref{fact:perturbedapprox}.

\begin{proof}[Proof of Fact~\ref{fact:perturbedpotential}]
The proof of this fact closely follows \citet{gupta2017potential}'s proof of Nesterov's accelerated gradient method in smooth strongly convex settings.

We begin by upper bounding the first summand in the left-hand side of \eqref{eq:potential-drop}.
Since $\loss$ is smooth and $y_{t}$ is a gradient step on $\loss$ from $x_{t-1}$, we have that $\loss(y_{t}) \leq \loss(x_{t-1}) - \frac 1 {2\lipschitz{}} \norm{\nabla x_{t-1}}^2$ and thus
\begin{align}
\label{eq:b1}
 &(1 + \gamma) (\loss(y_{t}) - \loss(x^*)) - (\loss(y_{t-1}) - \loss(x^*)) \nonumber \\
 &
\leq \loss(x_{t-1}) - \loss(y_{t-1}) + \gamma (\loss(x_{t-1}) - \loss(x^*)) - \frac{1+\gamma}{2\lipschitz{}} \norm{\nabla \loss(x_{t-1})}^2.
\end{align}
Using the $\mu$-strong convexity of $\loss$, we can further bound part of the right-hand side by
\begin{align}
\loss(x_{t-1}) - \loss(y_{t-1}) + \gamma (\loss(x_{t-1}) - \loss(x^*))
\leq \; & \inner{\nabla \loss(x_{t-1})}{x_{t-1} - y_{t-1}}\nonumber \\
& \;+ \gamma \para{\inner{\nabla \loss(x_{t-1})}{x_{t-1} - x^*}  - \frac \mu 2 \norm{x_{t-1} - x^*}^2}.
\label{eq:b2}
\end{align}

Plugging \eqref{eq:b2} and the identity $z_{t-1} = \frac{1}{\tau}x_{t-1} - \frac{1-\tau}{\tau}y_{t-1}$ into \eqref{eq:b1}, direct algebra yields an upper bound on the first summand of \eqref{eq:potential-drop}:
\begin{align}
& (1 + \gamma) (\loss(y_{t}) - \loss(x^*)) - (\loss(y_{t-1}) - \loss(x^*)) \nonumber \\
& \leq \frac{1}{1 + \gamma} \inner{\nabla \loss(x_{t-1})}{\gamma (z_{t-1} - x^*) + \gamma^2 (x_{t-1} - x^*)} - \frac{\mu \gamma}{2} \norm{x_{t-1} - x^*}^2 - \frac{1 + \gamma}{2 \lipschitz{}} \norm{\nabla \loss(x_{t-1})}^2. 
\label{eq:origpotentialleft}
\end{align}

Next, we turn to upper bounding the second summand 
in \eqref{eq:potential-drop}.
Plugging the iterate definitions $y_{t} = x_{t-1} - \frac{1}{\lipschitz{}} \nabla \loss(x_{t-1})$ and ${z}_{t-1} = \frac{1}{\tau} {x}_{t-1} - \frac{1-\tau}{\tau} {y}_{t-1}$ into our definition of $\tilde{z}_{t} = \frac{1}{\tau}\para{(1 - \tau)y_{t} - (1 - 2 \tau)y_{t-1}}$, we can recover the identity
$$\tilde{z}_{t} = \frac{1}{1+\gamma} z_{t-1} + \frac{\gamma}{1+\gamma} x_{t-1} - \frac{\gamma}{\mu(1 + \gamma)} \nabla \loss(x_{t-1}).$$
Plugging this identity for $\tilde{z}_{t}$ into the expression $\tfrac{\mu}{2}\left((1+\gamma)\|\tilde{z}_{t}-x^*\|^2-\|z_{t-1}-x^*\|^2\right)$, direct algebra yields the identity
\begin{align}
\tfrac{\mu}{2}\left((1+\gamma)\|\tilde{z}_{t}-x^*\|^2-\|z_{t-1}-x^*\|^2\right)
 = \;&  \frac{1 + \gamma}{2 \lipschitz{}} \norm{\nabla \loss(x_{t-1})}^2 \nonumber \\
 & - \frac{1}{1 + \gamma} \inner{\nabla \loss(x_{t-1})}{\gamma (z_{t-1} - x^*) + \gamma^2 (x_{t-1} - x^*)} \nonumber \\
& \; + \frac{\mu \gamma}{2} \norm{x_{t-1} - x^*}^2 - \frac{\mu \gamma}{2(1+\gamma)} \norm{z_{t-1} - x_{t-1}}^2. 
\label{eq:origpotentialright}
\end{align}
Summing \eqref{eq:origpotentialleft} and \eqref{eq:origpotentialright} yields the following upper bound on the left-hand side of \eqref{eq:potential-drop}:
\begin{align*}
    &
    (1+\gamma)(\loss(y_{t})-\loss(x^*)) - (\loss(y_{t-1})-\loss(x^*)) + \tfrac{\mu}{2}\left((1+\gamma)\|\tilde{z}_{t}-x^*\|^2-\|z_{t-1}-x^*\|^2\right) \\
    &\leq - \frac{\mu \gamma}{2(1+\gamma)} \norm{z_{t-1} - x_{t-1}}^2 \leq 0.
\end{align*}
\end{proof}

\begin{proof}[Proof of Fact~\ref{fact:perturbedapprox}]
To show this fact, we can observe from direct algebra that it suffices to prove the following inequalities:
\begin{gather}
    \tfrac \gamma 2 (\loss(y_{t}) - \loss(x^*))
    \geq (2 + \tfrac \gamma 2) 2 \rho \norm{y_{t} - x^*}^2,\label{eq:c4} \\
    (1 + \gamma) \norm{\tilde z_{t} - x^*}^2 \geq (1 + \tfrac \gamma 2)  \norm{z_{t} - x^*}^2 -  \tfrac {4 \rho} \mu (\norm{y_{t-1} - x^*}^2 + \norm{y_{t}-x^*}^2).\label{eq:c5}
\end{gather}

The inequality \eqref{eq:c4} follows directly from the $\mu$-convexity of $\loss$, and the identity $\rho = \frac{\mu \gamma}{4(4+ \gamma)}$, as
\[\tfrac{\gamma}{2}(\loss(y_{t}) - \loss(x^*)) \geq \tfrac{ \mu\gamma}{4}\|y_{t}-x^*\|^2 = (2+\tfrac{\gamma}{2})\cdot 2  \rho\|y_{t}-x^*\|^2.  \]

To show \eqref{eq:c5}, we first recall the general fact that $c (1+\delta)\|v\|^2 \geq c \|v + w\|^2 - c (1+\frac{1}{\delta})\|w\|^2$ for any $\delta, c > 0$.
Choosing $v = \tilde z_{t} - x^*$, $w = z_{t} - \tilde z_{t}$, $c = 1 + \tfrac \gamma 2$, and $\delta = \frac{\gamma}{\gamma + 2} > 0$, this yields
\begin{align}
\label{eq:c3}
1+\gamma\|\tilde{z}_{t}-x^*\|^2 \geq (1 + \tfrac \gamma 2) \|z_{t}-x^*\|^2 - (2 + \tfrac{2}{\gamma})(1 + \tfrac \gamma 2)\|\tfrac{1}{\tau}\Delta_{t}\|^2.
\end{align}
Using the perturbation norm bound, we can therefore bound
\[(2 + \tfrac{2}{\gamma})(1+\tfrac{\gamma}{2})\|\tfrac{1}{\tau}\Delta_{t}\|^2  \leq \tfrac {2\rho} \mu \|y_{t}-y_{t-1}\|^2 \leq   \tfrac {4\rho} \mu(\|y_{t}-x^*\|^2 + \|y_{t-1} - x^*\|^2),\]
where the last inequality uses the triangle inequality; plugged into \eqref{eq:c3}, this yields \eqref{eq:c5} as desired.
\end{proof}
\end{proof}

\fi

\section{Experiments}
\label{sec:experiments_main}
In this section we provide some proof-of-concept experiments to validate our theoretical results. 

\subsection{Deep Linear Networks and Squared Losses}
As our main theoretical results in Section~\ref{sec:deep-linear-theory} apply to the case of deep linear networks, we consider the same function class in our experiments on synthetic data with the square loss. Formally, the output space $\cY = \reals^d$, and for a predictor $x \mapsto W_t x$, we consider the loss
\begin{align}
\ell(W) = \mathbb{E}_{(x,y) \sim D} \left[ \tfrac 1 2\|W x - y\|^2 \right].
\end{align}

We consider a data distribution $D$ where the samples $(x,y)$ are drawn as follows. Let $W^*$ and be the ``ground truth'' positive definite matrix and let $\Sigma$ be the data covariance matrix. We first sample $x \sim N(0,\Sigma)$ and then conditioned on $x$, the output $y$ is generated as $y = W^* x + \xi$, where $\xi$ is a mean zero random variable.
We can then write the expected square loss explicitly as
\begin{align}
\ell(W) &= \mathbb{E}_{(x,y) \sim D}\left[\tfrac{1}{2}\|Wx - y\|^2\right] = \tfrac{1}{2} \Tr((W-W^*)\Sigma(W-W^*)^\top) + \tfrac 1 2\mathbb{E} \left[ \|\xi\|^2 \right].
\label{eq:squaredloss}
\end{align}

Note that for the case of the squared loss described above, the condition number of the expected loss depends on the covariance matrix $\Sigma$, i.e., $\kappa = \frac{\sigma_{\text{max}}(\Sigma)}{\sigma_{\text{min}}(\Sigma)}$.

\paragraph{Stacking updates.} For the specific case of the squared loss we get the following closed form expression of the stacking updates (see Eq. \eqref{eq:linearupdate})
\begin{align}
W_{t+1}
= (W_{t} + \beta (W_{t} - W_{t-1})W_{t-1}^{-1} W_{t})(I - \tfrac 1\lipschitz{} \Sigma) + \tfrac 1\lipschitz{} W^* \Sigma.
\label{eq:linear-sq-loss}
\end{align}
Here $L$ is the smoothness of the loss which depends on the largest singular value of $\Sigma$, $L = \sigma_{\text{max}}(\Sigma)$.

\paragraph{Nesterov's updates.} Similarly, we get the following closed form expression for obtaining Nesterov's updates (see Eq. \eqref{eq:nesterov-update}) for the case of squared loss 
\begin{align}
W_{t+1} 
= (W_{t} + \beta (W_{t} - W_{t-1}))(I - \tfrac 1\lipschitz{} \Sigma) + \tfrac 1\lipschitz{} W^* \Sigma.
\label{eq:nesterov-sq-loss}
\end{align}

For both stacking and Nesterov's updates we set $\beta = \frac{\sqrt{\kappa}-1}{\sqrt{\kappa}+1}$.

\subsection{Synthetic data experiments} 

We compare the performance of the three updates namely vanilla gradient descent, stacking updates (Eq. \ref{eq:linear-sq-loss}) and exact Nesterov updates (Eq. \ref{eq:nesterov-sq-loss}). Here at each stacking stage only the last layer is updated which matches our theoretical setup faithfully. In \if\arxiv0{Appendix~\ref{app:additional-expts}}\else{Section~\ref{sec:experiments_main}}\fi{}, we also consider the effect of training all the layers in each stacking stage which is closer to how stacking is applied in practice.

We consider points in $d=20$ dimensions. We generate the ground truth $W^*$ to be of the form $I + \sigma S$, where $S$ is a random positive semi-definite matrix of spectral norm $1$ and $\sigma$ is a parameter defining the closeness of $W^*$ to identity. For a given $\kappa > 1$, we generate a random covariance matrix $\Sigma$. Finally, we sample the noise $\xi$ in the output from a mean zero Gaussian with a standard deviation of $0.1$.

\begin{figure}
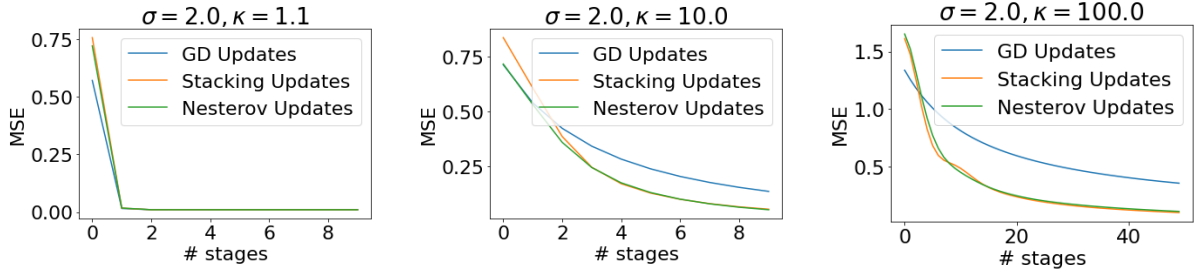

\centering
\begin{minipage}{.3\linewidth}
  \centering
  \includegraphics[width=\linewidth]{figs/stacking_static_sigma_2_kappa_1.pdf}
\end{minipage}
\quad
\begin{minipage}{.3\linewidth}
  \centering
  \includegraphics[width=\linewidth]{figs/stacking_static_sigma_2_kappa_10.pdf}
\end{minipage}
\quad
\begin{minipage}{.3\linewidth}
  \centering
  \includegraphics[width=\linewidth]{figs/stacking_static_sigma_2_kappa_100.pdf}
\end{minipage}
\caption{Mean squared error (MSE) vs. number of stacking stages. We observe that as the data becomes more ill conditioned both the stacking updates and Nesterov's updates demonstrate faster convergence than vanilla gradient descent.}
\label{fig:static_kappa}
\end{figure}
Figure \ref{fig:static_kappa} shows the performance of the three types of updates as the problem becomes more ill conditioned, i.e. as a function of $\kappa$. As expected, at small values of the condition number there is no advantage of the stacking updates over vanilla gradient descent. However, for ill conditioned data the stacking updates converge much faster than gradient descent. We also observe that the convergence of the stacking updates mirrors very closely the convergence behavior of the exact Nesterov's updates. 

\if\arxiv1
To further understand the relationship between the stacking updates and Nesterov's updates, in Figure \ref{fig:static_sigma} we show the performance of the two as the distance of $W^*$ from identity increases. As can be seen from the figures, when $W^*$ is farther from identity the stacking updates behave qualitatively different from Nesterov's updates during the initial phase where the loss for stacking updates explodes before converging in later stages. This suggests that in practice there may be a better way to initialize a stacking stage by making the initialization closer to the ideal Nesterov's updates. While in the case of deep linear networks and the squared loss we have a closed form expression for such an initialization, in general this is a hard problem.

\begin{figure}
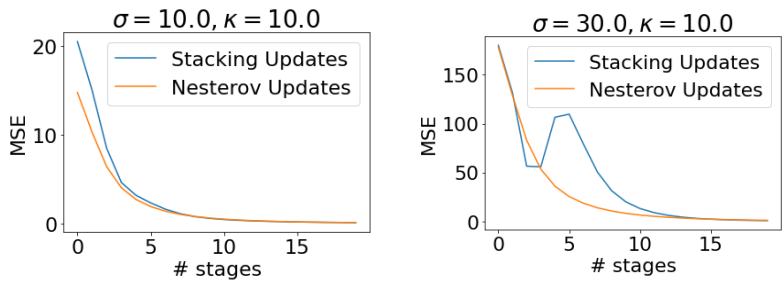

\centering
\begin{minipage}{.3\linewidth}
  \centering
  \includegraphics[width=\linewidth]{figs/stacking_static_sigma_10_kappa_10.pdf}
\end{minipage}
\quad
\begin{minipage}{.3\linewidth}
  \centering
  \includegraphics[width=\linewidth]{figs/stacking_static_sigma_30_kappa_10.pdf}
\end{minipage}
\caption{Mean squared error (MSE) vs. number of stacking stages. The figure compares stacking updates and Nesterov's updates as $W^*$ becomes farther from Identity, i.e. $\sigma$ increases. We observe that for higher values of $\sigma$ the stacking updates display a diverging behavior in the initial stages.}
\label{fig:static_sigma}
\end{figure}

Next we consider the case where in each stacking stage we train all the layers of the deep linear network. We use the same data generation procedure as described above. We perform $10$ stages of stacking where in each stage we perform $2$ steps of gradient descent with a learning rate of $1/L$ where $L$ is the smoothness of the loss function. We train on $1024$ examples with batch size of $32$ and test on $1024$ examples.

We consider two types of stacking based initialization schemes. The first one namely {\em Stacking Init.} initializes the next layer's weight matrix $w^0_{t+1}$ as $\beta w_{t}$. The second scheme namely {\em Nesterov Init.} initializes $w^0_t$ such that we recover the precises Nesterov's updates at initialization, i.e., Eq.~\ref{eq:nesterov-sq-loss}. From the analysis in Section~\ref{sec:functional} the initialization that achieves this amounts to setting $w^0_{t+1}$ as $\beta w_{t} (I + w_t)^{-1}$. 

Figure \ref{fig:sigma} shows the performance of the two stacking initialization schemes as compared to the {\em random} baseline where we initialize the next layer's weight matrix to be a random one.
We again observe that both the stacking schemes outperform the baseline particularly when the data is ill conditioned. 

\begin{figure}
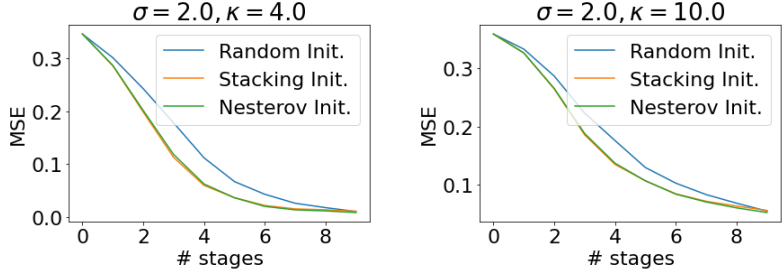

\centering
\begin{minipage}{.3\linewidth}
  \centering
  \includegraphics[width=\linewidth]{figs/stacking_sigma_2_kappa_4_1.pdf}
\end{minipage}
\quad
\begin{minipage}{.3\linewidth}
  \centering
  \includegraphics[width=\linewidth]{figs/stacking_sigma_2_kappa_10_1.pdf}
\end{minipage}
\caption{Mean squared error (MSE) vs. number of stacking stages when training all the layers.}
\label{fig:sigma}
\end{figure}
\fi

\subsection{Stacking for BERT Base with $\beta$ parameters}

\if\arxiv0
\begin{wrapfigure}{r}{0.5\textwidth}
    \centering
    \includegraphics[width=0.35\textwidth]{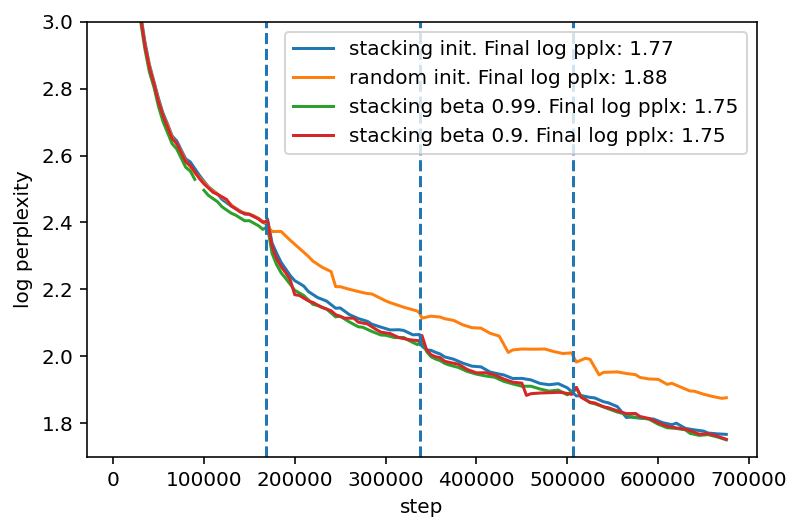}
    \caption{Stacking initialization with trainable $\beta$ parameter multiplying the output of the newly added transformer block. Experimental runs with $\beta$ initialized to $0.99$ and $0.9$ are provided.}
    \label{fig:bert-base-beta}
\end{wrapfigure}
\else
\begin{figure}[!t]
    \centering
    \includegraphics[width=0.65\textwidth]{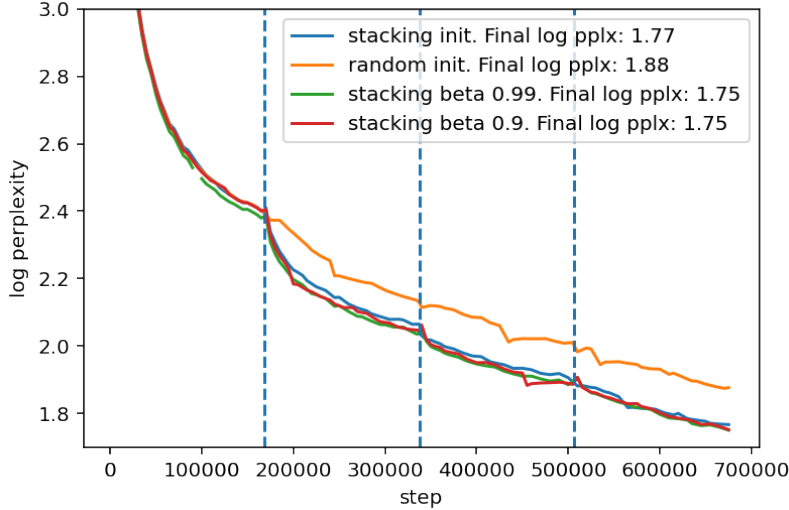}
    \caption{Stacking initialization with trainable $\beta$ parameter multiplying the output of the newly added transformer block. Experimental runs with $\beta$ initialized to $0.99$ and $0.9$ are provided.}
    \label{fig:bert-base-beta}
\end{figure}
\fi

The theory developed in Section~\ref{sec:functional} requires the initialization at the $(t+1)$-th stage to be $f_{t+1}^0 = \beta f_t$ for some $\beta \in [0, 1)$. The introduction of $\beta$ is crucial to get the accelerated convergence rate in Nesterov's method, but the standard stacking initialization doesn't use a $\beta$ parameter. We performed sanity check experiments on BERT Base to ensure that the introduction of the $\beta$ parameter doesn't affect the efficacy of stacking. We introduced a trainable parameter, $\beta$, that multiplies the output of the newly added transformer block in stacking, which is initialized to the values $0.9$ and $0.99$, which are standard settings for momentum parameters. Figure~\ref{fig:bert-base-beta} shows that introduction of the $\beta$ parameter doesn't hurt the efficacy of stacking. The plot also shows that the final log perplexity improves a bit when using trainable $\beta$.

\section{Conclusions and Future Work}
This paper develops the theoretical perspective that the effectiveness of stacking initialization, compared to other forms of initialization such as zero or random, is because it enables a form of accelerated gradient descent in function space. There are several directions for future work. While this work provides a formal proof of accelerated convergence for a particular parametric setting (deep residual linear networks), such a proof in the general functional setting for deep residual networks is still open, and will probably require some additional assumptions. From a practical standpoint, a very intriguing and potentially impactful question is whether it is possible to come up with an efficiently implementable initialization scheme that leads to Nesterov's AGD updates \textit{exactly} for deep residual networks. 

\bibliographystyle{alpha}

\begin{thebibliography}{RWC{\etalchar{+}}19}

\bibitem[AAA{\etalchar{+}}23]{achiam2023gpt}
Josh Achiam, Steven Adler, Sandhini Agarwal, Lama Ahmad, Ilge Akkaya, Florencia~Leoni Aleman, Diogo Almeida, Janko Altenschmidt, Sam Altman, Shyamal Anadkat, et~al.
\newblock Gpt-4 technical report.
\newblock {\em arXiv preprint arXiv:2303.08774}, 2023.

\bibitem[AS22]{ahn2022}
Kwangjun Ahn and Suvrit Sra.
\newblock Understanding nesterov's acceleration via proximal point method.
\newblock In Karl Bringmann and Timothy Chan, editors, {\em 5th Symposium on Simplicity in Algorithms, SOSA@SODA 2022, Virtual Conference, January 10-11, 2022}, pages 117--130. {SIAM}, 2022.

\bibitem[BG17]{gupta2017potential}
Nikhil Bansal and Anupam Gupta.
\newblock Potential-function proofs for first-order methods, 2017.

\bibitem[BLPL06]{bengio2006greedy}
Yoshua Bengio, Pascal Lamblin, Dan Popovici, and Hugo Larochelle.
\newblock Greedy layer-wise training of deep networks.
\newblock {\em Advances in neural information processing systems}, 19, 2006.

\bibitem[BMR{\etalchar{+}}20]{brown2020language}
Tom Brown, Benjamin Mann, Nick Ryder, Melanie Subbiah, Jared~D Kaplan, Prafulla Dhariwal, Arvind Neelakantan, Pranav Shyam, Girish Sastry, Amanda Askell, et~al.
\newblock Language models are few-shot learners.
\newblock {\em Advances in neural information processing systems}, 33:1877--1901, 2020.

\bibitem[BPC20]{beltagy2020longformer}
Iz~Beltagy, Matthew~E Peters, and Arman Cohan.
\newblock Longformer: The long-document transformer.
\newblock {\em arXiv preprint arXiv:2004.05150}, 2020.

\bibitem[CG16]{chen-guestrin}
Tianqi Chen and Carlos Guestrin.
\newblock Xgboost: {A} scalable tree boosting system.
\newblock {\em CoRR}, abs/1603.02754, 2016.

\bibitem[CND{\etalchar{+}}22]{chowdhery2022palm}
Aakanksha Chowdhery, Sharan Narang, Jacob Devlin, Maarten Bosma, Gaurav Mishra, Adam Roberts, Paul Barham, Hyung~Won Chung, Charles Sutton, Sebastian Gehrmann, et~al.
\newblock Palm: Scaling language modeling with pathways.
\newblock {\em arXiv preprint arXiv:2204.02311}, 2022.

\bibitem[CWC{\etalchar{+}}22]{chen2022pali}
Xi~Chen, Xiao Wang, Soravit Changpinyo, AJ~Piergiovanni, Piotr Padlewski, Daniel Salz, Sebastian Goodman, Adam Grycner, Basil Mustafa, Lucas Beyer, et~al.
\newblock Pali: A jointly-scaled multilingual language-image model.
\newblock {\em arXiv preprint arXiv:2209.06794}, 2022.

\bibitem[DCLT19]{devlin}
Jacob Devlin, Ming{-}Wei Chang, Kenton Lee, and Kristina Toutanova.
\newblock {BERT:} pre-training of deep bidirectional transformers for language understanding.
\newblock In Jill Burstein, Christy Doran, and Thamar Solorio, editors, {\em NAACL-HLT}, pages 4171--4186. Association for Computational Linguistics, 2019.

\bibitem[Fri01]{friedman2001greedy}
Jerome~H Friedman.
\newblock Greedy function approximation: a gradient boosting machine.
\newblock {\em Annals of statistics}, pages 1189--1232, 2001.

\bibitem[FS97]{freund1997decision}
Yoav Freund and Robert~E Schapire.
\newblock A decision-theoretic generalization of on-line learning and an application to boosting.
\newblock {\em Journal of computer and system sciences}, 55(1):119--139, 1997.

\bibitem[GBC16]{Goodfellow-et-al-2016}
Ian Goodfellow, Yoshua Bengio, and Aaron Courville.
\newblock {\em Deep Learning}.
\newblock MIT Press, 2016.
\newblock \url{http://www.deeplearningbook.org}.

\bibitem[GHL{\etalchar{+}}19]{gong2019efficient}
Linyuan Gong, Di~He, Zhuohan Li, Tao Qin, Liwei Wang, and Tieyan Liu.
\newblock Efficient training of bert by progressively stacking.
\newblock In {\em International conference on machine learning}, pages 2337--2346. PMLR, 2019.

\bibitem[GKS18]{shampoo}
Vineet Gupta, Tomer Koren, and Yoram Singer.
\newblock Shampoo: Preconditioned stochastic tensor optimization.
\newblock In Jennifer~G. Dy and Andreas Krause, editors, {\em ICML}, volume~80 of {\em Proceedings of Machine Learning Research}, pages 1837--1845. {PMLR}, 2018.

\bibitem[GLY{\etalchar{+}}20]{gu2020transformer}
Xiaotao Gu, Liyuan Liu, Hongkun Yu, Jing Li, Chen Chen, and Jiawei Han.
\newblock On the transformer growth for progressive bert training.
\newblock {\em arXiv preprint arXiv:2010.12562}, 2020.

\bibitem[HM17]{hardt-ma}
Moritz Hardt and Tengyu Ma.
\newblock Identity matters in deep learning.
\newblock In {\em ICLR}. OpenReview.net, 2017.

\bibitem[HOT06]{hinton2006fast}
Geoffrey~E Hinton, Simon Osindero, and Yee-Whye Teh.
\newblock A fast learning algorithm for deep belief nets.
\newblock {\em Neural computation}, 18(7):1527--1554, 2006.

\bibitem[HZRS16]{kaiminghe}
Kaiming He, Xiangyu Zhang, Shaoqing Ren, and Jian Sun.
\newblock Deep residual learning for image recognition.
\newblock In {\em CVPR}, pages 770--778. {IEEE} Computer Society, 2016.

\bibitem[IS15]{batchnorm}
Sergey Ioffe and Christian Szegedy.
\newblock Batch normalization: Accelerating deep network training by reducing internal covariate shift.
\newblock In Francis~R. Bach and David~M. Blei, editors, {\em ICML}, volume~37 of {\em {JMLR} Workshop and Conference Proceedings}, pages 448--456. JMLR.org, 2015.

\bibitem[JSR{\etalchar{+}}24]{jiang2024mixtral}
Albert~Q Jiang, Alexandre Sablayrolles, Antoine Roux, Arthur Mensch, Blanche Savary, Chris Bamford, Devendra~Singh Chaplot, Diego de~las Casas, Emma~Bou Hanna, Florian Bressand, et~al.
\newblock Mixtral of experts.
\newblock {\em arXiv preprint arXiv:2401.04088}, 2024.

\bibitem[Kaw16]{kawaguchi_deep_16}
Kenji Kawaguchi.
\newblock Deep learning without poor local minima.
\newblock In Daniel~D. Lee, Masashi Sugiyama, Ulrike von Luxburg, Isabelle Guyon, and Roman Garnett, editors, {\em Advances in Neural Information Processing Systems 29}, pages 586--594, 2016.

\bibitem[LLH{\etalchar{+}}23]{liu2023sophia}
Hong Liu, Zhiyuan Li, David Hall, Percy Liang, and Tengyu Ma.
\newblock Sophia: A scalable stochastic second-order optimizer for language model pre-training.
\newblock {\em arXiv preprint arXiv:2305.14342}, 2023.

\bibitem[MBBF99]{mason}
Llew Mason, Jonathan Baxter, Peter~L. Bartlett, and Marcus~R. Frean.
\newblock Boosting algorithms as gradient descent.
\newblock In Sara~A. Solla, Todd~K. Leen, and Klaus{-}Robert M{\"{u}}ller, editors, {\em NeurIPS}, pages 512--518. The {MIT} Press, 1999.

\bibitem[Nes83]{nesterov1983method}
Yurii Nesterov.
\newblock A method for unconstrained convex minimization problem with the rate of convergence o (1/k\^{} 2).
\newblock In {\em Doklady an ussr}, volume 269, pages 543--547, 1983.

\bibitem[RMK{\etalchar{+}}23]{reddi2023efficient}
Sashank~J Reddi, Sobhan Miryoosefi, Stefani Karp, Shankar Krishnan, Satyen Kale, Seungyeon Kim, and Sanjiv Kumar.
\newblock Efficient training of language models using few-shot learning.
\newblock In {\em International Conference on Machine Learning}, pages 14553--14568. PMLR, 2023.

\bibitem[RWC{\etalchar{+}}19]{radford2019language}
Alec Radford, Jeffrey Wu, Rewon Child, David Luan, Dario Amodei, Ilya Sutskever, et~al.
\newblock Language models are unsupervised multitask learners.
\newblock {\em OpenAI blog}, 1(8):9, 2019.

\bibitem[SF12]{freundschapirebook}
Robert~E. Schapire and Yoav Freund.
\newblock {\em Boosting: Foundations and Algorithms}.
\newblock The MIT Press, 2012.

\bibitem[SMG14]{saxe_exact_13}
Andrew~M. Saxe, James~L. McClelland, and Surya Ganguli.
\newblock Exact solutions to the nonlinear dynamics of learning in deep linear neural networks.
\newblock In Yoshua Bengio and Yann LeCun, editors, {\em 2nd International Conference on Learning Representations, {ICLR} 2014, Conference Track Proceedings}, 2014.

\bibitem[SMM{\etalchar{+}}17]{shazeer2017outrageously}
Noam Shazeer, Azalia Mirhoseini, Krzysztof Maziarz, Andy Davis, Quoc Le, Geoffrey Hinton, and Jeff Dean.
\newblock Outrageously large neural networks: The sparsely-gated mixture-of-experts layer.
\newblock {\em arXiv preprint arXiv:1701.06538}, 2017.

\bibitem[SS18]{shazeer2018adafactor}
Noam Shazeer and Mitchell Stern.
\newblock Adafactor: Adaptive learning rates with sublinear memory cost.
\newblock In {\em International Conference on Machine Learning}, pages 4596--4604. PMLR, 2018.

\bibitem[SWK{\etalchar{+}}22]{shen2022staged}
Sheng Shen, Pete Walsh, Kurt Keutzer, Jesse Dodge, Matthew Peters, and Iz~Beltagy.
\newblock Staged training for transformer language models.
\newblock In {\em International Conference on Machine Learning}, pages 19893--19908. PMLR, 2022.

\bibitem[TAB{\etalchar{+}}23]{team2023gemini}
Gemini Team, Rohan Anil, Sebastian Borgeaud, Yonghui Wu, Jean-Baptiste Alayrac, Jiahui Yu, Radu Soricut, Johan Schalkwyk, Andrew~M Dai, Anja Hauth, et~al.
\newblock Gemini: a family of highly capable multimodal models.
\newblock {\em arXiv preprint arXiv:2312.11805}, 2023.

\bibitem[TAD{\etalchar{+}}23]{tu2023towards}
Tao Tu, Shekoofeh Azizi, Danny Driess, Mike Schaekermann, Mohamed Amin, Pi-Chuan Chang, Andrew Carroll, Chuck Lau, Ryutaro Tanno, Ira Ktena, et~al.
\newblock Towards generalist biomedical {AI}.
\newblock {\em arXiv preprint arXiv:2307.14334}, 2023.

\bibitem[VSP{\etalchar{+}}17]{vaswani2017attention}
Ashish Vaswani, Noam Shazeer, Niki Parmar, Jakob Uszkoreit, Llion Jones, Aidan~N Gomez, {\L}ukasz Kaiser, and Illia Polosukhin.
\newblock Attention is all you need.
\newblock {\em Advances in neural information processing systems}, 30, 2017.

\bibitem[ZO17]{zhu2017}
Zeyuan~Allen Zhu and Lorenzo Orecchia.
\newblock Linear coupling: An ultimate unification of gradient and mirror descent.
\newblock In Christos~H. Papadimitriou, editor, {\em 8th Innovations in Theoretical Computer Science Conference, {ITCS} 2017, January 9-11, 2017, Berkeley, CA, {USA}}, volume~67 of {\em LIPIcs}, pages 3:1--3:22. Schloss Dagstuhl - Leibniz-Zentrum f{\"{u}}r Informatik, 2017.

\end{thebibliography}
\newcommand{\etalchar}[1]{$^{#1}$}

\end{document}